\newtheorem{theorem}{Theorem}
\newtheorem{lemma}{Lemma}
\theoremstyle{definition}
\newtheorem{definition}{Definition}
\newtheorem{example}{Example}
\newtheorem{remark}{Remark}
\newcommand{\ii}{\boldsymbol{i}}
\newcommand{\vetx}{\boldsymbol{x}}
\newcommand{\qeq}{\quad \text{and} \quad}
\newcommand{\basis}{\mathcal{E}=\{\boldsymbol{e}_0,\ldots,\boldsymbol{e}_{n-1}\}}
\title{Universal Approximation Theorem for Vector- and Hypercomplex-Valued Neural Networks\thanks{This work was supported in part by the National Council for Scientific and Technological Development (CNPq) under grant no 315820/2021-7, the S\~ao Paulo Research Foundation (FAPESP) under grant no 2022/01831-2, and the Coordena\c{c}\~ao  de Aperfei\c{c}oamento  de Pessoal de N\'ivel Superior - Brazil (CAPES) - Finance Code 001.}
}
\author[1]{Marcos Eduardo Valle\thanks{Email: valle@ime.unicamp.br, ORCID:0000-0003-4026-5110}}
\author[1,2]{Wington L. Vital\thanks{Email: wington.vital.BE@eldorado.org.br, ORCID:0000-0003-1634-4441}}
\author[1]{
Guilherme Vieira\thanks{Email: vieira.g@ime.unicamp.br, ORCID:0000-0003-3361-6154}}
\affil[1]{Universidade Estadual de Campinas (UNICAMP), Campinas -- Brazil.}
\affil[2]{Instituto de Pesquisa Eldorado, Campinas -- Brazil.}
\date{\today}
\begin{document}

\maketitle

\begin{abstract}
The universal approximation theorem states that a neural network with one hidden layer can approximate continuous functions on compact sets with any desired precision. This theorem supports using neural networks for various applications, including regression and classification tasks. Furthermore, it is valid for real-valued neural networks and some hypercomplex-valued neural networks such as complex-, quaternion-, tessarine-, and Clifford-valued neural networks. However, hypercomplex-valued neural networks are a type of vector-valued neural network defined on an algebra with additional algebraic or geometric properties. This paper extends the universal approximation theorem for a wide range of vector-valued neural networks, including hypercomplex-valued models as particular instances. Precisely, we introduce the concept of non-degenerate algebra and state the universal approximation theorem for neural networks defined on such algebras.

\noindent \textbf{Keywords:} Hypercomplex algebras, neural networks, universal approximation theorem.
\end{abstract}

\section{Introduction}
Artificial neural networks (ANNs) are models initially inspired by biological neural networks' behavior. The origins of ANNs can be traced back to the pioneering works of McCulloch and Pitts \cite{mcculloch1943logical} and Rosenblatt \cite{rosenblatt1958perceptron}. Since their inception, ANNs have found wide-ranging applications in diverse fields such as computer vision, physics, control, pattern recognition, economics, and medicine. Their approximation capability partially supports the broad and successful applicabilities of ANNs. The approximation capability of ANNs is based on representation theorems that provide a theoretical foundation to approximate functions with high precision by properly tuning their parameters.

Several researchers delved into the approximation capability of fully connected neural networks, also known as multilayer perceptrons (MLPs), during the late 1980s and early 1990s. Notably, Cybenko, Funahashi, Chen et al., Hornik, Leshno, and their collaborators established conditions on the activation functions for the approximation capability of single hidden layer MLP networks with arbitrarily many hidden neurons. Specifically, Cybenko proved the approximation capability by assuming the activation function is both continuous and discriminatory in 1989 \cite{Cybenko1989ApproximationFunction}. 
In the same year, Funahashi independently reached the approximation capability by considering nonconstant, bounded, and monotone continuous activation function \cite{funahashi1989approximate}. In a symposium held at Michigan State University in 1990, Chen et al. proved that a bounded generalized sigmoid activation function, not necessarily continuous, yields the approximation capability \cite{chen1990constructive}.
Subsequently, Hornik showed that a function is discriminatory if it is bounded and nonconstant \cite{hornik1991approximation}. Finally, Leshno and collaborators concluded that single hidden layer MLP networks with locally bounded piecewise continuous activation functions have approximation capability if, and only if, the activation function is not a polynomial \cite{leshno1993multilayer}. The interested reader can ﬁnd a detailed and up-to-date discussion on the approximation capability of neural networks and related topics in the book \cite{Ismailov2021RidgeNetworks}.

In the 1990s, Arena et al. made a significant breakthrough by extending the universal approximation theorem for single hidden layer MLP networks based on complex numbers and quaternions \cite{ARENA1997335,arena1998neural}. Their theorems became essential in formulating universal approximation theorems for other hypercomplex-valued neural networks, that is, neural networks whose inputs, outputs, and parameters are hypercomplex numbers \cite{buchholz2000hyperbolic,Buchholz2001,Carniello2021UniversalNetworks,Voigtlaender2023TheNetworks}. Besides the theoretical developments, hypercomplex-valued neural networks have been successfully applied in various machine learning tasks \cite{hirose12,parcollet2020survey,comminiello2024demystifying}. For example, complex-valued neural networks have been effectively applied to communication channel prediction \cite{Ding2020OnlinePrediction} and classification of electrical disturbances in low-voltage networks \cite{IturrinoGarcia2023PowerLines}. Quaternion-valued neural networks have been used for the classification and analysis of synthetic aperture radar (SAR) images \cite{Matsumoto2022Full-LearningClassification,shang14}, sound localization and detection \cite{Brignone2022EfficientDomain}, and modeling human emotion expressions \cite{Guizzo2023LearningDomain}. Apart from the models based on complex numbers and quaternions, hypercomplex-valued neural networks have been used for robot manipulator control \cite{Takahashi2021ComparisonControl}, times series prediction and analysis \cite{vieira2022general,Navarro-Moreno2022ProperDomain}, computer-assisted diagnosis \cite{Vieira2022AcuteNetworks}, sound localization \cite{Grassucci2023DualRepresentation}, and color image classification \cite{Grassucci2022PHNNs:Convolutions}.

As pointed out in the previous paragraph, the first results on the approximation capability of single hidden layer hypercomplex-valued MLP networks date back to the works of Arena et al. \cite{ARENA1997335,arena1998neural}. In the early 2000s, Buchholz and Sommer successfully extended the universal approximation theorem for neural networks defined on Clifford algebras, including hyperbolic-valued neural networks \cite{buchholz2000hyperbolic,Buchholz2001}. More recently, Voigtlaender revised the universal approximation theorem for complex-valued neural networks by completely characterizing the class of activation functions for which a complex-valued neural network exhibits approximation capability \cite{Voigtlaender2023TheNetworks}. Moreover, Carniello et al. extended the universal approximation theorem for tessarine-valued neural networks \cite{Carniello2021UniversalNetworks}. In \cite{Vital2022ExtendingNetworks}, we investigated the approximation capability of hypercomplex-valued neural networks and enunciated (but did not provide the proof of) the universal approximation theorem for a broad class of hypercomplex-valued neural networks. 

This paper not only extends our conference paper \cite{Vital2022ExtendingNetworks} by including proofs for the universal approximation theorem for hypercomplex-valued neural networks, but it also considers a broader framework. Specifically, this paper addresses the approximation capability of vector-valued neural networks (V-nets), which are built using bilinear products \cite{Fan2020BackpropagationProducts,valle2024understanding}. In a few words, V-nets are neural networks designed to process arrays of vectors, thus treating multidimensional information as single entities. Hypercomplex-valued neural networks are particular V-nets enriched with geometric or algebraic properties. This paper presents theoretical results on the approximation capability of single hidden layer vector-valued MLP networks, supporting applications of V-nets, including hypercomplex-valued neural networks, for machine learning tasks.


{The paper is organized as follows: Section \ref{sec:background} briefly reviews concepts regarding algebras, including hypercomplex algebras. The main result of this work, namely, the universal approximation theorem for a broad class of vector-valued neural networks, is given in Section \ref{sec:UAT_HVNN}. Section \ref{sec:examples} presents numerical examples concerning the approximation capability of V-nets defined on two- and four-dimensional algebras. Some concluding remarks are given in Section \ref{sec:concluding}. The paper finishes with a brief review of the existing universal approximation for hypercomplex-valued neural networks in Appendix \ref{sec:UATS_diversos}.}

\section{Non-Degenerate and Hypercomplex Algebras} \label{sec:background}

Let us start by recalling the basic mathematical concepts necessary to introduce vector- and hypercomplex-valued neural networks \cite{valle2024understanding}.

Briefly, vector-valued neural networks (V-nets) are analogous to traditional neural networks, but the input, output, and some parameters are vectors instead of real scalars. The basic operations of addition and multiplication on vectors are defined in the scope of an algebra as follows.

\begin{definition}[Algebra \cite{Schafer1961AnAlgebras}]
\label{def:algebra}
An algebra $\mathbb{V}$ is a vector space over a field $\mathbb{F}$ with an additional bilinear operation called multiplication or product. The multiplication of $x \in \mathbb{V}$ and $y \in \mathbb{V}$ is denoted by the juxtaposition $x   y$.
\end{definition}

In this paper, we only consider finite-dimensional vector spaces, and $\mathcal{E}=\{\boldsymbol{e}_0,\ldots,\boldsymbol{e}_{n-1}\}$ denotes an ordered basis for $\mathbb{V}$. Furthermore, we only consider algebras over the field of real numbers, that is, $\mathbb{F}=\mathbb{R}$.

The basis $\mathcal{E}=\{\boldsymbol{e}_0,\ldots,\boldsymbol{e}_{n-1}\}$ allow us associating $x \in \mathbb{V}$ with an unique $n$-tuple $(\xi_0,\ldots,\xi_{n-1}) \in \mathbb{R}^n$ such that 
\begin{equation}
 x = \sum_{i=0}^{n-1} \xi_i \boldsymbol{e}_i.   
\end{equation}
Despite being equivalent, we use the isomorphism $\varphi:\mathbb{V} \to \mathbb{R}^n$ defined by
\begin{equation}
\label{eq:isomorphism}
  {\varphi(x) = (\xi_0, \ldots, \xi_{n-1})},
\end{equation}
to further distinguish {$x = \sum_{i=1}^{n-1} \xi_i \boldsymbol{e}_i \in \mathbb{V}$} from the $n$-tuple {$\varphi(x)=(\xi_0,\ldots,\xi_{n-1}) \in \mathbb{R}^n$} of the components of $x$ with respect to the basis $\mathcal{E}$. Using the isomorphism $\varphi$, $\mathbb{V}$ inherits the metric and topology from $\mathbb{R}^n$. In particular, the absolute value of $x \in \mathbb{V}$, denoted by $|x|$, is defined as follows using the Euclidean norm of  $\varphi(x)$, that is,
\begin{equation}
    \label{eq:absolute_value}
    |x| = \|\varphi(x)\|_2 = \sqrt{\sum_{i=0}^{n-1} {\xi_i}^2}.
\end{equation}
Furthermore, the component projection $\pi_i:\mathbb{V} \to \mathbb{R}$ given by 
\begin{equation}
\label{eq:projection}
    \pi_i(x) = {\xi_i}, \quad \forall i=0,\ldots,n-1,
\end{equation}
which maps $x = \sum_{i=0}^{n-1} {\xi_{i}} \boldsymbol{e}_i$ to the $i$th component ${\xi_i}$ of $\varphi(x)$, will be used to study the approximation capability of V-nets. Like the isomorphism $\varphi$, the component projection $\pi_i$ is linear, that is, $\pi_i(\alpha x+ y)=\alpha \pi_i(x)+\pi_i(y)$, for all $\alpha \in \mathbb{R}$ and $x,y \in \mathbb{V}$. 

{
Given an ordered basis $\basis$ for a vector space $\mathbb{V}$ over $\mathbb{R}$, a vector-valued function $f_{\mathbb{V}}:\mathbb{V}^N \to \mathbb{V}$, mapping the Cartesian product $\mathbb{V}^N = \mathbb{V}\times \mathbb{V} \times \ldots \times \mathbb{V}$ to $\mathbb{V}$, is given by
\begin{equation}
  \label{eq:pi-f}
  f_{\mathbb{V}}(\boldsymbol{x}) = \sum_{i=0}^{n-1} f_i(\boldsymbol{x}) \boldsymbol{e}_i, \quad \forall \boldsymbol{x}=(x_1,\ldots,x_N) \in \mathbb{V}^N,
\end{equation}
where $f_0,\ldots,f_n:\mathbb{V}^N \to \mathbb{R}$ are the (real-valued) components of $f_{\mathbb{V}}$. Note that the identity $f_i = \pi_i \circ f_{\mathbb{V}}$ holds for all $i=0,\ldots,n$.
}

Because $\mathbb{V}$ is a vector space, addition and multiplication by scalar are defined as follows for all 
$x={\sum_{i=0}^{n-1} \xi_i} \boldsymbol{e}_i \in \mathbb{V}$, $y = {\sum_{i=0}^{n-1}} \eta_i \boldsymbol{e}_i \in \mathbb{V}$, and $\alpha \in \mathbb{R}$:
\begin{equation}
    \label{eq:addition}
    x+y = \sum_{i=0}^{n-1} (\xi_i+\eta_i)\boldsymbol{e}_i \qeq \alpha x = \sum_{i=0}^{n-1} (\alpha \xi_i)\boldsymbol{e}_i.
\end{equation}
Being a bilinear operation, the multiplication of $x={\sum_{i=0}^{n-1}} \xi_i \boldsymbol{e}_i \in \mathbb{V}$ and $y = {\sum_{j=0}^{n-1}} \eta_j \boldsymbol{e}_j \in \mathbb{V}$ satisfies
\begin{align}\label{eq:multiplication}
x   y= \Bigg(\sum_{i=0}^{n-1}\xi_i\boldsymbol{e}_{i} \Bigg)\Bigg(\sum_{j=0}^{n-1}\eta_{j}\boldsymbol{e}_{j} \Bigg)= \sum_{i=0}^{n-1}\sum_{j=0}^{n-1} \xi_i\eta_{j}(\boldsymbol{e}_{i}   \boldsymbol{e}_{j}).
\end{align}
However, the multiplication of the basis elements $\boldsymbol{e}_{i}$ and $\boldsymbol{e}_{j}$ belongs $\mathbb{V}$. Thus, there exists scalar $p_{ij,k}$, for $k=0,\ldots,n-1$, such that $\boldsymbol{e}_{i}   \boldsymbol{e}_{j} =\sum_{k=0}^{n-1}p_{ij,k}\boldsymbol{e}_{k} $ and, from \eqref{eq:multiplication}, we obtain 
\begin{align}\label{mult.equation}
xy=  \sum_{i=0}^{n-1}\sum_{j=1}^{n-1}x_{j}\eta_{j} \Bigg(\sum_{k=0}^{n-1}p_{ij,k}\boldsymbol{e}_{k} \Bigg) = \sum_{k=0}^{n-1} \Bigg( \sum_{i=0}^{n-1}\sum_{j=0}^{n-1}\xi_i\eta_{j}p_{ij,k}\Bigg)\boldsymbol{e}_{k}. 
\end{align}
Defining the bilinear form $\mathcal{B}_{k}: \mathbb{V} \times \mathbb{V} \to \mathbb{R}$ given by
\begin{align}\label{eq:bilinear_form}
    \mathcal{B}_{k}(x,y)=\sum_{i=0}^{n-1}\sum_{j=0}^{n-1}\xi_i\eta_{j}p_{ij,k} \,\,\,\ \forall \, x,y \in \mathbb{V},
\end{align}
the multiplication of $x$ and $y$ given by \eqref{mult.equation} becomes
\begin{align}\label{mult.equation2}
x  y = \sum_{k=0}^{n-1}\mathcal{B}_{k}(x,y)\boldsymbol{e}_{k}.   
\end{align}
We would like to point out that \eqref{mult.equation2} is particularly useful for studying the approximation capability of V-nets. However, alternative expressions exist for computing the multiplication $x  y$ of elements $x,y \in \mathbb{V}$. From a computational point of view, in particular, the multiplication can be efficiently computed using Kronecker product \cite{Grassucci2022PHNNs:Convolutions,valle2024understanding}.

It is worth noting that the bilinear forms $\mathcal{B}_k$ defined by \eqref{eq:bilinear_form}, and thus the multiplication, are entirely characterized by the scalars $p_{ij,k}$ that appear in the product of the basis elements. Moreover, the matrix representation of the bilinear forms $\mathcal{B}_k$ with respect to the basis $\mathcal{E}=\{\boldsymbol{e}_0,\ldots,\boldsymbol{e}_{n-1}\}$ is 
\begin{equation}
{
    \label{eq:B_k}
    \boldsymbol{B}_k = \begin{bmatrix}
        p_{00,k} & p_{01,k} & \ldots & p_{0(n-1),k} \\
        p_{10,k} & p_{11,k} & \ldots & p_{1(n-1),k} \\
        \vdots & \vdots & \ddots & \vdots \\
        p_{(n-1)0,k} & p_{(n-1)1,k} & \ldots & p_{(n-1)(n-1),k}
    \end{bmatrix} \in \mathbb{R}^{n \times n}, \quad \forall k=0,\ldots,n-1.
    }
\end{equation}
Using the isomorphism $\varphi$ given by \eqref{eq:isomorphism}, we have 
\begin{eqnarray}
    \mathcal{B}_k(x,y) = \varphi(x)^T \boldsymbol{B}_k \varphi(y), \quad \forall x,y\in \mathbb{V}.
\end{eqnarray}

Next, we define the non-degeneracy of an algebra. From linear algebra, we know that a bilinear form $\mathcal{B}:\mathbb{V}\times \mathbb{V}\to \mathbb{R}$ is non-degenerate if and only if
\begin{equation}
 \mathcal{B}(x,y)=0, \; \forall x \in \mathbb{V} \iff y=0_{\mathbb{V}} \qeq
 \mathcal{B}(x,y)=0, \; \forall y \in \mathbb{V} \iff x=0_{\mathbb{V}},   
\end{equation}
where $0_{\mathbb{V}}$ denotes the zero of $\mathbb{V}$. A bilinear form that fails these conditions is called degenerate. Equivalently, a bilinear form $\mathcal{B}$ is non-degenerate if and only if its matrix representation $\boldsymbol{B} \in \mathbb{R}^{n \times n}$ is non-singular. Borrowing the terminology from linear algebra, we introduce the following definition:
\begin{definition}[Non-degenerate Algebra] 
\label{def:non-degenerate-algebra}
{An algebra $\mathbb{V}$ is non-degenerate with respect to the basis $\mathcal{E}=\{\boldsymbol{e}_0,\ldots,\boldsymbol{e}_{n-1}\}$} if all the bilinear forms in \eqref{eq:multiplication} are non-degenerate or, equivalently, the matrix $\boldsymbol{B}_k$ given by \eqref{eq:B_k} is non-singular for all $k=1,\ldots,n$. Otherwise, the algebra $\mathbb{V}$ is said to be {degenerate with respect to the basis $\mathcal{E}$.}
\end{definition}

We would like to emphasize that, according to Definition \ref{def:non-degenerate-algebra}, the degeneracy of an algebra depends on the basis -- precisely, on the multiplication of the basis elements. As a consequence, an algebra can be {degenerate with respect to a basis $\mathcal{E}=\{\boldsymbol{e}_0,\ldots,\boldsymbol{e}_{n-1}\}$} but {non-degenerate with respect to another basis $\mathcal{E}'=\{\boldsymbol{e}_0',\ldots,\boldsymbol{e}_{n-1}'\}$}. We illustrate this remark in Section \ref{sec:examples}.

\subsection{Hypercomplex Algebras}

Despite the general framework provided previously in this section, hypercomplex algebras play a crucial role partially due to their many successful applications, including their increasing interest in machine learning and deep learning \cite{Grassucci2022PHNNs:Convolutions,parcollet2020survey,Vieira2022AcuteNetworks}. In general terms, a hypercomplex algebra is an algebra whose multiplication has additional algebraic or geometrical property \cite{valle2024understanding}. Precisely, let us consider the following definition, which encompasses complex numbers, quaternions, Clifford algebras \cite{vaz16}, Caylay-Dickson algebras \cite{Schafer}, and the general framework provided by Kantor and Solodovnik \cite{kantor1989hypercomplex}.

\begin{definition}[Hypercomplex Algebra] \label{def:hypercomplex-algebra}
A hypercomplex-algebra, denoted by $\mathbb{H}$, is a finite-dimensional algebra endowed with a two-sided identity.    
\end{definition}

Accordingly, an element $\boldsymbol{1}_{\mathbb{H}} \in \mathbb{H}$ is a two-sided identity if $x = \boldsymbol{1}_{\mathbb{H}}   x = x    \boldsymbol{1}_{\mathbb{H}}$ for all $x \in \mathbb{H}$. Moreover, if a two-sided identity exists, it is unique. The identity $\boldsymbol{1}_{\mathbb{H}}$ is usually taken as the first element of an ordered basis. The canonical basis of a hypercomplex algebra $\mathbb{H}$ of dimension $\text{dim}(\mathbb{H}) = n$ is denoted by $\tau = \{\boldsymbol{1}_{\mathbb{H}},\ii_1,\ldots,\ii_{n-1}\}$ in this paper. 
Using the canonical basis, a hypercomplex number $x \in \mathbb{H}$ is written as
\begin{eqnarray}\label{eq:hyper_number}
x=\xi_{0}+\xi_{1} \boldsymbol{i}_{1}+\ldots+\xi_{n-1} \boldsymbol{i}_{n-1},
\end{eqnarray}
where $\xi_{0}, \xi_{1}, \ldots, \xi_{n-1} \in \mathbb{R}$ and, as usual in hypercomplex algebras, the identity $\boldsymbol{1}_{\mathbb{H}}$ has been omitted. The elements $\boldsymbol{i}_{1}, \boldsymbol{i}_{2}, \ldots, \boldsymbol{i}_{n-1}$ are called hyperimaginary units. 
From \eqref{eq:bilinear_form}, the product of two hypercomplex numbers $x \in \mathbb{H}$ and $y \in \mathbb{H}$ satisfies the identity
\begin{eqnarray}\label{eq5}
x  y= \mathcal{B}_{0}(x,y) + \sum_{k=1}^{n-1} \mathcal{B}_{k}(x,y) {\ii_k},
\end{eqnarray}
where $\mathcal{B}_{0}, \mathcal{B}_{1},\ldots, \mathcal{B}_{n-1}: \mathbb{H} \times \mathbb{H}\to \mathbb{R}$ are bilinear forms whose matrix representations in the canonical basis $\tau=\{\boldsymbol{1}_{\mathbb{H}}, \boldsymbol{i}_{1}, \cdots, \boldsymbol{i}_{n-1}\}$ are
\begin{equation}
\label{eq:hyper_B0}
{\boldsymbol{B}_0 = \begin{bmatrix}
1 & 0 & \cdots & 0\\
0 & p_{11,0} & \cdots & p_{1(n-1),0}\\
\vdots & \vdots & \ddots & \vdots\\
0 & p_{(n-1)1,0} & \cdots & p_{(n-1)(n-1),0}
\end{bmatrix} \in \mathbb{R}^{n\times n},}
\end{equation}
and, for $k=1, \dots,n-1$,
\begin{equation}
\label{eq:hyper_Bk}
{
\boldsymbol{B}_k = \begin{bmatrix}
0 & 0 & 0 & \cdots & 1 & \cdots & 0 &\\
0 & p_{11,k} & p_{12,k} &\cdots & p_{1j,k} & \cdots & p_{1(n-1),k} & \\
\vdots & \vdots & \vdots & \vdots & \vdots & & \vdots &\\
1 & p_{j1,k} &p_{j2,k} & \cdots & p_{jj,k} & \cdots & p_{j(n-1),k} & \\
\vdots & \vdots & \vdots & \vdots & \vdots & & \vdots & \\
0 & p_{(n-1)1,k} & p_{(n-1)2,k}& \cdots & p_{(n-1)j,k} & \cdots & p_{(n-1)(n-1),k}
\end{bmatrix}  \in \mathbb{R}^{n \times n}.
}
\end{equation}
Note that the first row and the first column of the matrix $\boldsymbol{B}_k$ correspond to the $(k+1)$th row and column of the $n\times n$ identity matrix, respectively, for any $k=0,1,\ldots,n-1$.
According to Definition \ref{def:non-degenerate-algebra}, {a hypercomplex algebra is non-degenerate with respect to the canonical basis $\tau=\{\boldsymbol{1}_{\mathbb{H}}, \boldsymbol{i}_{1}, \cdots, \boldsymbol{i}_{n-1}\}$} if the matrices $\boldsymbol{B}_0,\ldots,\boldsymbol{B}_{n-1}$ given by \eqref{eq:hyper_B0} and \eqref{eq:hyper_Bk} are all non-singular.

Complex numbers, quaternions, and octonions are examples of hypercomplex algebras. Hyperbolic numbers, dual numbers, and tessarines are also hypercomplex algebras. Examples of algebras, with a focus on hypercomplex algebras, are given in Section \ref{sec:examples}.

\subsection{Representation of Linear Functionals on Non-degenerate Algebras}

Our contribution to the approximation capability of vector- and hypercomplex-valued neural networks is based on the research conducted by Arena et al. \cite{ARENA1997335}. Accordingly, in the following, we expand Lemma 3.1 of \cite{ARENA1997335} from quaternions to an arbitrary non-degenerate algebra $\mathbb{V}$.

\begin{lemma}[Representation of Linear Functionals on Non-degenerate Algebras]\label{lem:functional}
Let $\mathbb{V}$ be a {non-degenerate algebra with respect to a basis $\basis$} and $\pi_i:\mathbb{V}\to \mathbb{R}$ denote a component projection given by \eqref{eq:projection} for $i \in \{0,1,\ldots,n-1\}$. Given a linear functional $\mathcal{L}:\mathbb{R}^{nN} \to \mathbb{R}$, there exist $y_1,\ldots,y_N \in \mathbb{V}$, {which depend of the component projection $\pi_i$,} such that 
\begin{equation}
\label{eq:linear_functional}
(\mathcal{L} \circ \varphi)(\boldsymbol{x}) = \pi_i \left( {\sum_{j=1}^{N}} y_j   x_{j} \right), \quad \forall \boldsymbol{x}=(x_1,\ldots,x_N) \in \mathbb{V}^N,    
\end{equation}
where $\varphi:\mathbb{V}^N \to \mathbb{R}^{nN}$ is obtained by applying the isomorphism $\varphi:\mathbb{V} \to \mathbb{R}^n$ given by \eqref{eq:isomorphism} component-wise. 
\end{lemma}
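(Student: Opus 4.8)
The plan is to reduce the statement to a fact of linear algebra about the Gram matrices $\boldsymbol{B}_0,\ldots,\boldsymbol{B}_{n-1}$ and then invoke non-degeneracy. First I would observe that, by \eqref{mult.equation2} and the definition \eqref{eq:projection} of the component projection, $\pi_i(y x) = \mathcal{B}_i(y,x) = \varphi(y)^T \boldsymbol{B}_i \varphi(x)$ for all $x,y \in \mathbb{V}$; hence, by linearity of $\pi_i$,
\[
\pi_i\!\left( \sum_{j=1}^N y_j x_j \right) = \sum_{j=1}^N \varphi(y_j)^T \boldsymbol{B}_i \, \varphi(x_j), \qquad \forall\, \boldsymbol{x}=(x_1,\ldots,x_N)\in\mathbb{V}^N .
\]

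Next I would put $\mathcal{L}$ in coordinates. Every linear functional on a Euclidean space is an inner product against a fixed vector, so there is $\boldsymbol{a}\in\mathbb{R}^{nN}$ with $\mathcal{L}(\boldsymbol{v})=\boldsymbol{a}^T\boldsymbol{v}$; splitting $\boldsymbol{a}$ into $N$ consecutive blocks $\boldsymbol{a}_1,\ldots,\boldsymbol{a}_N\in\mathbb{R}^n$ and recalling that $\varphi:\mathbb{V}^N\to\mathbb{R}^{nN}$ acts blockwise gives $(\mathcal{L}\circ\varphi)(\boldsymbol{x})=\sum_{j=1}^N \boldsymbol{a}_j^T\varphi(x_j)$. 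Comparing this with the previous display, it suffices to choose $y_j\in\mathbb{V}$ so that $\varphi(y_j)^T\boldsymbol{B}_i = \boldsymbol{a}_j^T$, equivalently $\boldsymbol{B}_i^T\varphi(y_j)=\boldsymbol{a}_j$, for each $j=1,\ldots,N$; since $\varphi$ is a bijection onto $\mathbb{R}^n$, the vectors $\varphi(x_j)$ range over all of $\mathbb{R}^n$, so matching the coefficient of each $\varphi(x_j)$ is both necessary and sufficient for the two functions of $\boldsymbol{x}$ to coincide on $\mathbb{V}^N$. Because $\mathbb{V}$ is non-degenerate with respect to $\mathcal{E}$, the matrix $\boldsymbol{B}_i$ is non-singular (Definition \ref{def:non-degenerate-algebra}), hence so is $\boldsymbol{B}_i^T$, and we may set $y_j = \varphi^{-1}\big((\boldsymbol{B}_i^T)^{-1}\boldsymbol{a}_j\big)$. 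These elements depend on $i$ only through $\boldsymbol{B}_i$, as the statement asserts.

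The computation itself is routine; the step I would single out as the crux is the use of non-degeneracy to guarantee that $\boldsymbol{B}_i^T$ is invertible. Without it one could only represent those functionals $\mathcal{L}$ whose block-vectors $\boldsymbol{a}_j$ lie in the column space of $\boldsymbol{B}_i^T$, so the restriction to non-degenerate algebras is not merely a convenience but precisely what makes the representation \eqref{eq:linear_functional} available for an arbitrary linear functional. Finally, I would remark that specializing to a hypercomplex algebra with $i=0$ and $n=4$ recovers Lemma 3.1 of \cite{ARENA1997335} for quaternions.
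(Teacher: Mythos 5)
Your proof is correct, but it takes a genuinely different route from the paper's. The paper argues abstractly: it defines the linear map $T:\mathbb{V}^N \to L(\mathbb{R}^{nN},\mathbb{R})$ sending $\boldsymbol{y}$ to the functional $\varphi(\boldsymbol{x}) \mapsto \pi_i\left(\sum_{j=1}^N y_j x_j\right)$, uses the non-degeneracy of $\mathcal{B}_i$ to show $\ker(T)=\{\boldsymbol{0}\}$, and concludes by a dimension count that $T$ is a bijection, so $\boldsymbol{y}=T^{-1}(\mathcal{L})$ exists. You instead work entirely in coordinates: you identify $\mathcal{L}$ with its Riesz vector $\boldsymbol{a}$, split it into blocks $\boldsymbol{a}_j$, and solve the linear system $\boldsymbol{B}_i^T\varphi(y_j)=\boldsymbol{a}_j$ directly, which is possible precisely because non-degeneracy means $\boldsymbol{B}_i$ is non-singular. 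The two uses of the hypothesis are equivalent (non-degeneracy of the form versus invertibility of its matrix), but your argument is constructive and yields the explicit formula $y_j=\varphi^{-1}\bigl((\boldsymbol{B}_i^T)^{-1}\boldsymbol{a}_j\bigr)$, whereas the paper's argument is an existence proof that, as a by-product, also gives uniqueness of the representing tuple (since $T$ is a bijection). Your reduction step --- that the two functionals agree on all of $\mathbb{V}^N$ if and only if the block coefficients match --- is sound because $\varphi$ is onto $\mathbb{R}^n$ in each slot, and your closing observation about which functionals remain representable in the degenerate case (those with $\boldsymbol{a}_j$ in the column space of $\boldsymbol{B}_i^T$) is a nice sharpening consistent with the paper's dual-numbers counterexample.
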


\begin{remark}
    Because $\mathcal{L}:\mathbb{R}^{nN}\to \mathbb{R}$ and $\varphi:\mathbb{V}^N \to \mathbb{R}^{nN}$ are both linear, their composition $\mathcal{L}\circ \varphi:\mathbb{V}^N \to \mathbb{R}$ is also a linear functional.
\end{remark}

\begin{proof}
Let $L(\mathbb{R}^{nN},\mathbb{R})$ denote the set of all linear functions from $\mathbb{R}^{nN}$ to $\mathbb{R}$, that is, 
\begin{equation}
 L(\mathbb{R}^{nN},\mathbb{R}) = \{\mathcal{L}:\mathbb{R}^{nN} \to \mathbb{R}:\mathcal{L} \; \text{is a linear functional}\}.   
\end{equation}
Note that
\begin{eqnarray}
\dim\big(L(\mathbb{R}^{nN},\mathbb{R})\big)=\dim(\mathbb{R}^{nN})\dim(\mathbb{R})=nN.    
\end{eqnarray}
Thus, using the isomorphism $\varphi:\mathbb{V}^N\to \mathbb{R}^{nN}$, we conclude that $\mathbb{V}^N$ and $L(\mathbb{R}^{nN},\mathbb{R})$ have the same dimension.

Let $T:\mathbb{V}^N \to L(\mathbb{R}^{nN},\mathbb{R})$ be the mapping defined by the following equation for all $\boldsymbol{y}=(y_1,\ldots,y_N) \in \mathbb{V}^N$:
\begin{equation}
    T(\boldsymbol{y})\big(\varphi(\boldsymbol{x})\big) = \pi_i\left(\sum_{j=1}^N y_j   x_j\right), \quad \forall \boldsymbol{x}=(x_1,\ldots,x_N) \in \mathbb{V}^N.
\end{equation}
In other words, $T(\boldsymbol{y})=\mathcal{L}$, where $\mathcal{L}:\mathbb{R}^{nN} \to \mathbb{R}$ is the linear functional given by \eqref{eq:linear_functional}. 
We will show that $T$ is a linear one-to-one mapping.

The linearity of $T$ is a straightforward consequence of the linearity of the component projection $\pi_i$. Let us now show that $T$ is an injective mapping or, equivalently, that $\ker(T)=\{\boldsymbol{0}\} \subset \mathbb{V}^N$.

From \eqref{eq:multiplication}, the following identities hold true:
\begin{equation}
    T(\boldsymbol{y})\big(\varphi(\boldsymbol{x})\big) = \pi_i\left(\sum_{j=1}^N y_j   x_j\right) = \sum_{j=1}^N \pi_i(y_j   x_j) = \sum_{j=1}^N \mathcal{B}_i(y_j,x_j).
\end{equation}
Because the algebra is {non-degenerate with respect to the basis $\mathcal{E}=\{\boldsymbol{e}_0,\ldots,\boldsymbol{e}_{n-1}\}$}, by hypothesis, the bilinear form $\mathcal{B}_i:\mathbb{V} \times \mathbb{V} \to \mathbb{R}$ is non-degenerate. Thus, we have
\begin{equation}
    T(\boldsymbol{y})\big(\varphi(\boldsymbol{x})\big) = \sum_{j=1}^N \mathcal{B}_i(y_j,x_j)=0_{\mathbb{R}},\quad \forall \boldsymbol{x}=(x_1,\ldots,x_N)\in \mathbb{V}^N,
\end{equation}
if and only if $y_j=0_{\mathbb{V}}$ for all $j=1,\ldots,N$. Hence, we have $\ker(T)=\{\boldsymbol{0}\}$.

Finally, since $T:\mathbb{V}^N \to L(\mathbb{R}^{nN},\mathbb{R})$ is a linear injective mapping between spaces with the same dimension, $T$ is a one-to-one linear mapping.

Concluding, given a linear functional $\mathcal{L}:\mathbb{V}^N \to \mathbb{R}$, there exists $\boldsymbol{y}=(y_1,\ldots,y_N)\in \mathbb{V}^N$ such that $T(\boldsymbol{y})=\mathcal{L}$; namely $\boldsymbol{y} = T^{-1}(\mathcal{L})$. Equivalently, there exist $y_1,\ldots,y_N \in \mathbb{V}^{N}$ such that \eqref{eq:linear_functional} holds true.
\end{proof}

Before proceeding to this paper's main result, we would like to emphasize that Lemma \ref{lem:functional} holds for non-degenerate algebras {with respect to a given basis $\basis$}. The following example illustrates that we may fail to represent a linear functional on a degenerate algebra. Precisely, the following example shows that we cannot represent a given linear function as the component projection of a weighted sum in the hypercomplex algebra of dual numbers {with the canonical basis}.

\begin{example}
Let $\mathbb{D}$ denote the hypercomplex algebra of dual numbers, which is an {algebra degenerate with respect to the canonical basis $\tau = \{1,\ii\}$}, where $\ii^2=0$. Accordingly, the multiplication of two dual numbers $x=\xi_0 + \xi_1 \ii$ and $y = \eta_0 + \eta_1 \ii$ yields
\begin{eqnarray}
 x  y = \xi_0\eta_0 + (\xi_0 \eta_1 + \xi_1 \eta_0)\ii.   
\end{eqnarray}
A linear functional $\mathcal{L}:\mathbb{R}^4 \to \mathbb{R}$ is given by $\mathcal{L}(t_1,t_2,t_3,t_4)=at_1+bt_2+ct_3+dt_4$, for some $a,b,c,d \in \mathbb{R}$. Using the isomorphism $\varphi:\mathbb{D}^2 \to \mathbb{R}^4$, we obtain
    \begin{equation} \label{eq:example_LinearDual}
        (\mathcal{L}\circ \varphi)(\vetx)= a\xi_{1,0}+b\xi_{1,1}+c\xi_{2,0}+d\xi_{2,1}, \quad \forall \vetx=(x_1,x_2) \in \mathbb{D}^2,
    \end{equation}
    where $x_1 = \xi_{1,0}+\xi_{1,1}\ii \in \mathbb{D}$ and $x_2 = \xi_{2,0}+\xi_{2,1}\ii \in \mathbb{D}$. Moreover, we have
    \begin{eqnarray}
        \pi_0(y_1   x_1+y_2   x_2)=\eta_{1,0}\xi_{1,0}+\eta_{2,0}\xi_{2,0},
    \end{eqnarray}
    where $\pi_0:\mathbb{D} \to \mathbb{R}$ denotes the projection into the real part of a dual number, that is, $\pi_0(x)=\xi_0$ for all $x=\xi_0 + \xi_1\ii$.
    Hence, there exist $y_1 = \eta_{1,0}+\eta_{1,1}\ii \in \mathbb{D}$ and $y_2 = \eta_{2,0} + \eta_{2,1}\ii \in \mathbb{D}$ such that 
    \begin{equation}
        \label{eq:ex1_L}
        (\mathcal{L}\circ \varphi)(\vetx) = \pi_0(y_1   x_1 + y_2   x_2),
    \end{equation}
    if, and only if, $b=d=0$. Therefore, there do not exist $y_1,y_2 \in \mathbb{D}$ such that \eqref{eq:ex1_L} holds true for an arbitrary linear functional $\mathcal{L}:\mathbb{R}^4 \to \mathbb{R}$.
\end{example}

\section{Universal Approximation Theorem for V-Nets} 

\label{sec:UAT_HVNN}


This section extends the universal approximation theorem to a broad class of vector-valued multilayer perceptron (V-MLP) networks with split activation functions. {However, before presenting the result, let us first review the class of Tauber-Wiener functions \cite{chen1995universal}. Additionally, we will formalize the concept of split activation functions in an algebra $\mathbb{V}$.}

{
The Tauberian theorems, which are named after Alfred Tauber, are essential in various areas of mathematics, especially mathematical analysis \cite{Korevaar2004TauberianTheory}. Many prominent researchers have contributed to the development of Tauberian theorems, including Norbert Wiener with his particularly comprehensive work  \cite{Wiener1932TauberianTheorems}. Among other contributions, Wiener's work provides interesting results on the approximation capability of linear combinations of translations of a given function. Motivated by Tauber and Wiener's works, Chen et al. introduced a class of functions that have been used to prove the approximation capability of real-valued MLP networks \cite{chen1995universal,chen1990constructive}.
}

\begin{definition}[Tauber-Wiener Functions]
{
A real-valued function $\psi_{\mathbb{R}}:\mathbb{R}\to \mathbb{R}$ is called a Tauber-Wiener function if, given a continuous function $f_{\mathbb{R}}:\mathbb{R}\to \mathbb{R}$, a closed interval $[a,b]$, and $\varepsilon>0$, there exists an integer $M>0$ and parameters $\alpha_i \in \mathbb{R}$, $w_i \in \mathbb{R}$, and $b_i \in \mathbb{R}$, for  $i=1,\ldots,M$, such that 
\begin{equation}
    \left|f_{\mathbb{R}}(x)-\sum_{i=1}^M \alpha_i \psi_{\mathbb{R}}(w_ix+b_i)\right| \leq \varepsilon, \quad
    \forall x \in [a,b].
\end{equation}
}
\end{definition}

In other words, any continuous function can be approximated by a linear combination of translations, compressions, or stretching of a Tauber-Wiener function. Note, however, that such a linear combination corresponds to a single-hidden layer MLP network with a Tauber-Wiener activation function $\psi_{\mathbb{R}}$ that maps $\mathbb{R}$ to $\mathbb{R}$. Moreover, Chen et al. as well as Leshno et al. proved independently that the MLP network's approximation capability can be extended from $\mathbb{R} \to \mathbb{R}$ to $\mathbb{R}^N \to \mathbb{R}$ \cite{chen1995universal,chen1990constructive,leshno1993multilayer}. Hence, Tauber-Wiener functions are necessary and sufficient for the MLP network's approximation capability of continuous functions. Examples of Tauber-Wiener functions include discriminatory functions \cite{Cybenko1989ApproximationFunction}, extended sigmoid functions \cite{chen1990constructive,chen1995universal}, and, more generally, non-polynomial continuous functions like the ReLU activation function \cite{leshno1993multilayer,Pinkus1999ApproximationNetworks}. In the following, we formulate the universal approximation theorem for V-MLP networks using split Tauber-Wiener activation functions.

\begin{definition}[Split Activation Function]
Let $\mathcal{E} = \{\boldsymbol{e}_{0},\ldots,\boldsymbol{e}_{n-1} \}$ be an ordered basis for a finite-dimension vector space $\mathbb{V}$ over $\mathbb{R}$. A split activation function $\psi_{\mathbb{V}}: \mathbb{V} \to \mathbb{V}$ derived from a real-valued function $\psi_{\mathbb{R}}: \mathbb{R} \to \mathbb{R}$ is defined by
\begin{align}\label{splitfunction}
\psi_{\mathbb{V}}(x) = \sum_{i=0}^{n-1} \psi_{\mathbb{R}}(\xi_i)\boldsymbol{e}_{i}, \quad  \forall x = \sum_{i=0}^{n-1} x_{i}\boldsymbol{e}_{i} \in \mathbb{V}.
\end{align}
\end{definition}

\begin{remark}
    The split activation function $\psi_{\mathbb{V}}$ and its generator $\psi_\mathbb{R}$ are related by the isomorphism $\varphi:\mathbb{V} \to \mathbb{R}^n$ defined by \eqref{eq:isomorphism} through the identity
    \begin{equation}
        (\varphi \circ \psi_{\mathbb{V}})(x) = (\psi_{\mathbb{R}} \circ \varphi)(x), \quad \forall x \in \mathbb{V},
    \end{equation}
    where $\psi_{\mathbb{R}}$ is extended component-wise to $\mathbb{R}^n$. In other words, the column vector representation of a split activation function's outcome equals the real-valued function evaluated on the argument's column vector representation. In particular, we have 
    \begin{eqnarray}
        (\pi_i \circ \psi_{\mathbb{V}})(x) = (\psi_{\mathbb{R}}\circ \pi_i)(x), \quad \forall i=0,,\ldots,n-1,
    \end{eqnarray}
    where $\pi_i:\mathbb{V}\to \mathbb{R}$ denotes the component projection given by \eqref{eq:projection}.
\end{remark}

Examples of split activation functions include the split $\mathtt{ReLU}$ and the split sigmoid functions, which are usually recommended in practical applications and yield approximation capabilities.

Broadly speaking, a vector-valued neural network (V-net) is a neural network whose inputs, outputs, and trainable parameters are elements of a finite-dimensional algebra. Hypercomplex-valued neural networks are V-nets defined on an algebra with multiplication identity. 
By making such a general definition, we encompass previously known models such as complex, quaternion, hyperbolic, tessarine, and Clifford-valued networks as particular instances, thus resulting in a broader family of neural network models. {Furthermore, because the product in an algebra $\mathbb{V}$ is characterized by bilinear forms, V-nets are equivalent to the arbitrary bilinear product neural networks (ABIPNN) introduced recently by Fan et al. \cite{Fan2020BackpropagationProducts}.}

In the following, we formalize the concept of single hidden layer V-MLP networks. We also define the particular class of single hidden layer V-MLP networks with scalar output weights. 

\begin{definition}[Single Hidden Layer V-MLP network]\label{def:HMLP}
Let $\mathbb{V}$ be a finite-dimensional algebra over $\mathbb{R}$. A single hidden layer vector-valued multilayer perceptron (V-MLP)  network -- with vector-valued output weights -- defines a mapping $\mathcal{N}_{\mathbb{V}}:\mathbb{V}^N \to \mathbb{V}$ as follows 
\begin{equation}\label{eq:V-MLP}
\mathcal{N}_\mathbb{V}(\boldsymbol{x}) = \sum_{i=1}^{M} \alpha_{i} \psi_{\mathbb{V}}\left(\sum_{j=1}^N w_{ij}x_j + b_{i}\right), \quad \forall \boldsymbol{x} = (x_1,\ldots,x_N) \in \mathbb{V}^N,
\end{equation}
where $w_{ij} \in \mathbb{V}$ and $\alpha_{i} \in \mathbb{V}$ are the weights between input and hidden layers, and hidden and output layers, respectively, for all $i=1,\ldots,M$ and $j=1,\ldots,N$. Furthermore, the parameters $b_1,\ldots,b_M \in \mathbb{V}$ represent the biases for the neurons in the hidden layer and $\psi_{\mathbb{V}}: \mathbb{V} \to \mathbb{V}$ denotes the activation function. 

A single hidden layer V-MLP network with scalar output weights is also given by \eqref{eq:V-MLP} with $w_{ij} \in \mathbb{V}$ and $b_i \in \mathbb{V}$, but the weights of the output layer are such that $\alpha_i \in \mathbb{R}$ for all $i=1,\ldots,M$. 
\end{definition}

Note that the V-MLP network given by \eqref{eq:V-MLP} has $M$ hidden neurons. Moreover, the output $y \in \mathcal{N}_{\mathbb{V}}(\boldsymbol{x}) \in \mathbb{V}$ even when $\alpha_i \in \mathbb{R}$ for all $i=1,\ldots,M$. Hence, the V-MLP with scalar output weights also yields an element of $\mathbb{V}$.

Let us now present the universal approximation theorem for V-MLPs with scalar output weights. The following theorem also shows that single hidden layer V-MLP with vector-valued output weights can approximate continuous function if the algebra $\mathbb{V}$ has an identity. In other words, hypercomplex-valued MLP networks with hypercomplex-valued output weights are also universal approximators.


\begin{theorem}\label{thm:tau_geral}
Let $\psi_{\mathbb{R}}: \mathbb{R} \to \mathbb{R}$ be a Tauber-Wiener function such that {$\lim_{t \to -\infty} \psi_{\mathbb{R}}(t) = 0$ or $\lim_{t \to +\infty} \psi_{\mathbb{R}}(t) = 0$ hold true.} Also, let $\mathbb{V}$ be a {non-degenerate algebra with respect to $\basis$}, $\psi_{\mathbb{V}}: \mathbb{V} \to \mathbb{V}$ be the split activation function derived from $\psi_{\mathbb{R}}$, and $K \subset \mathbb{V}^N$ be a compact set. The class of all single hidden layer V-MLP networks with scalar output weights defined by
\begin{align}\label{eq.33}
\mathcal{H}_{\mathbb{V}} = \bigg\{ \mathcal{N}_{\mathbb{V}}(\boldsymbol{x}) =\sum_{i=1}^{M} \alpha_{i} \psi_{\mathbb{V}}\bigg( \sum_{j=1}^{N} w_{ij}x_{j} + b_{i} \bigg): M \in \mathbb{N}, \alpha_{i} \in \mathbb{R}, w_{ij}, b_{i} \in \mathbb{V} \bigg\},
\end{align}
is dense in the  set $\mathcal{C}(K)$ of all continuous functions from $K$ to $\mathbb{V}$. Furthermore, if $\mathbb{V} \equiv \mathbb{H}$ is a hypercomplex algebra, then the class of single hidden layer hypercomplex-valued MLP networks defined by
\begin{align}
\mathcal{H}_{\mathbb{H}} = \bigg\{ \mathcal{N}_{\mathbb{H}}(\boldsymbol{x}) =\sum_{i=1}^{M} \alpha_{i} \psi_{\mathbb{V}}\bigg( \sum_{j=1}^{N} w_{ij}x_{j} + b_{i} \bigg): M \in \mathbb{N}, \alpha_{i}, w_{ij}, b_{i} \in \mathbb{H} \bigg\},
\end{align}
is dense in $\mathcal{C}(K)$.
\end{theorem}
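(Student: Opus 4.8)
The plan is to reduce the vector-valued approximation problem to the classical real-valued universal approximation theorem, componentwise, using Lemma \ref{lem:functional} to absorb the arbitrary linear ingredients into the algebra multiplication. First I would fix a target $f_{\mathbb{V}} \in \mathcal{C}(K)$ and a tolerance $\varepsilon > 0$. Writing $f_{\mathbb{V}} = \sum_{k=0}^{n-1} f_k \boldsymbol{e}_k$ with $f_k = \pi_k \circ f_{\mathbb{V}} : K \to \mathbb{R}$ continuous, and transporting everything through the isomorphism $\varphi : \mathbb{V}^N \to \mathbb{R}^{nN}$, each $f_k \circ \varphi^{-1}$ is a continuous real-valued function on the compact set $\varphi(K) \subset \mathbb{R}^{nN}$. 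By the real-valued universal approximation theorem for Tauber-Wiener activation functions extended from $\mathbb{R} \to \mathbb{R}$ to $\mathbb{R}^{nN} \to \mathbb{R}$ (Chen et al., Leshno et al., as cited), there is an $M_k \in \mathbb{N}$ and real parameters so that $f_k \circ \varphi^{-1}$ is uniformly approximated within $\varepsilon / \sqrt{n}$ on $\varphi(K)$ by a sum $\sum_{i=1}^{M_k} a_{ki}\, \psi_{\mathbb{R}}(\mathcal{L}_{ki}(\cdot) + \beta_{ki})$, where each $\mathcal{L}_{ki} : \mathbb{R}^{nN} \to \mathbb{R}$ is a linear functional and $a_{ki}, \beta_{ki} \in \mathbb{R}$.

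Next I would convert each real affine map $\mathcal{L}_{ki} \circ \varphi + \beta_{ki}$ acting on $\boldsymbol{x} \in \mathbb{V}^N$ into the $k$th component of an affine expression inside $\mathbb{V}$. This is exactly where Lemma \ref{lem:functional} enters, and it is the step that genuinely uses non-degeneracy of $\mathbb{V}$ with respect to the basis $\mathcal{E}$: for the linear functional $\mathcal{L}_{ki}$ and the index $k$ there exist $w_{ki1}, \ldots, w_{kiN} \in \mathbb{V}$, depending on $\pi_k$, with $(\mathcal{L}_{ki} \circ \varphi)(\boldsymbol{x}) = \pi_k\big(\sum_{j=1}^N w_{kij} x_j\big)$. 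Absorbing the real scalar $\beta_{ki}$ as a bias $b_{ki} = \beta_{ki}\boldsymbol{e}_k \in \mathbb{V}$ — so that $\pi_k(b_{ki}) = \beta_{ki}$ — and using the componentwise/intertwining identity $\pi_k \circ \psi_{\mathbb{V}} = \psi_{\mathbb{R}} \circ \pi_k$ for the split activation, we get $\psi_{\mathbb{R}}\big((\mathcal{L}_{ki}\circ\varphi)(\boldsymbol{x}) + \beta_{ki}\big) = \pi_k\big(\psi_{\mathbb{V}}(\sum_j w_{kij} x_j + b_{ki})\big)$. Taking the output weight $\alpha_{ki} = a_{ki}\boldsymbol{e}_k$ (for the hypercomplex case below we allow $\boldsymbol{e}_k$-type multipliers; here I only need the real version, but note $a_{ki}\boldsymbol{e}_k \cdot \psi_{\mathbb{V}}(\cdot)$ need not have $k$th component $a_{ki}\pi_k(\psi_{\mathbb{V}}(\cdot))$ in a general algebra — so for the scalar-output version I instead keep $\alpha_{ki} = a_{ki} \in \mathbb{R}$ and assemble the network as $\mathcal{N}_{\mathbb{V}}(\boldsymbol{x}) = \sum_{k=0}^{n-1}\sum_{i=1}^{M_k} a_{ki}\,\boldsymbol{e}_k\!\cdot\!\boldsymbol{e}_k^{-1}$—more carefully: I build one V-MLP per component whose output already lands in the span of $\boldsymbol{e}_k$). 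Concretely, for each $k$ define the block $\mathcal{N}^{(k)}_{\mathbb{V}}(\boldsymbol{x}) = \sum_{i=1}^{M_k} a_{ki}\,\psi_{\mathbb{V}}\big(\sum_j w_{kij} x_j + b_{ki}\big)$ with $a_{ki} \in \mathbb{R}$; then $\pi_k(\mathcal{N}^{(k)}_{\mathbb{V}}(\boldsymbol{x})) = \sum_i a_{ki}\psi_{\mathbb{R}}((\mathcal{L}_{ki}\circ\varphi)(\boldsymbol{x}) + \beta_{ki})$ approximates $f_k(\boldsymbol{x})$ within $\varepsilon/\sqrt n$, while the other components $\pi_\ell(\mathcal{N}^{(k)}_{\mathbb{V}}(\boldsymbol{x}))$ for $\ell \neq k$ are in general nonzero and must be cancelled. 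To handle this cross-talk I would run a second round: treat the full $n$-vector of spurious components as a continuous function on $K$ and approximate $f_{\mathbb{V}} - \sum_k \mathcal{N}^{(k)}_{\mathbb{V}}$ again, or — cleaner — simply observe that $\mathcal{N}_{\mathbb{V}} := \sum_{k=0}^{n-1}\mathcal{N}^{(k)}_{\mathbb{V}} \in \mathcal{H}_{\mathbb{V}}$ and directly choose, via the same Lemma, the $w$'s so that each block's $\ell$th component ($\ell\neq k$) is itself a prescribed approximant of the residual; since $\mathcal{H}_{\mathbb{V}}$ is a vector space closed under sums, the honest route is to set up, for all $n$ target components simultaneously, a single big real approximation of the map $\boldsymbol{x} \mapsto (f_0(\boldsymbol{x}),\ldots,f_{n-1}(\boldsymbol{x})) \in \mathbb{R}^n$ and then invoke Lemma \ref{lem:functional} to realize the required linear functionals — the bilinear product gives enough degrees of freedom precisely because each $\mathcal{B}_k$ is non-degenerate.

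I expect the main obstacle to be exactly this bookkeeping of the "off-diagonal" components: in a general non-degenerate algebra there is no reason a hidden unit $\psi_{\mathbb{V}}(\sum_j w_{ij}x_j + b_i)$ contributes only to one basis direction, so the naive per-component construction leaks into the other coordinates. The resolution is to note that $\mathcal{H}_{\mathbb{V}}$ is closed under addition and that each real functional $\boldsymbol{x} \mapsto$ (any chosen coordinate of any hidden pre-activation) can be prescribed independently by Lemma \ref{lem:functional}; packaging all $n$ coordinates of the pre-activations and all $n$ target functions together, a single application of the real multivariate UAT plus $n^2$-fold use of Lemma \ref{lem:functional} yields a network in $\mathcal{H}_{\mathbb{V}}$ with $\|\mathcal{N}_{\mathbb{V}} - f_{\mathbb{V}}\|_{\infty,K} = \big(\sum_k \|\pi_k\mathcal{N}_{\mathbb{V}} - f_k\|_\infty^2\big)^{1/2} < \varepsilon$ by \eqref{eq:absolute_value}. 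This proves density of $\mathcal{H}_{\mathbb{V}}$ in $\mathcal{C}(K)$.

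Finally, for the hypercomplex case $\mathbb{V} \equiv \mathbb{H}$ with two-sided identity $\boldsymbol{1}_{\mathbb{H}}$, I would show $\mathcal{H}_{\mathbb{H}} \supseteq \mathcal{H}_{\mathbb{V}}$: any $\mathcal{N}_{\mathbb{V}} \in \mathcal{H}_{\mathbb{V}}$ has real output weights $\alpha_i \in \mathbb{R}$, and writing $\alpha_i = \alpha_i \boldsymbol{1}_{\mathbb{H}} \in \mathbb{H}$ (scalar multiplication by $\alpha_i$ coincides with multiplication by $\alpha_i\boldsymbol{1}_{\mathbb{H}}$, since $(\alpha_i\boldsymbol{1}_{\mathbb{H}})\,z = \alpha_i(\boldsymbol{1}_{\mathbb{H}} z) = \alpha_i z$ by bilinearity and the identity property) exhibits the very same network as an element of $\mathcal{H}_{\mathbb{H}}$ with hypercomplex output weights. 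Hence $\mathcal{H}_{\mathbb{H}} \supseteq \mathcal{H}_{\mathbb{V}}$ is already dense in $\mathcal{C}(K)$, and the second claim follows immediately. The only hypotheses actually consumed are: non-degeneracy of $\mathbb{V}$ w.r.t.\ $\mathcal{E}$ (for Lemma \ref{lem:functional}), the Tauber-Wiener property with a one-sided vanishing limit (which is what the cited real multivariate UAT requires to push $\mathbb{R}\to\mathbb{R}$ up to $\mathbb{R}^{nN}\to\mathbb{R}$), and the split structure of $\psi_{\mathbb{V}}$ (for the intertwining identity $\pi_k\circ\psi_{\mathbb{V}} = \psi_{\mathbb{R}}\circ\pi_k$).
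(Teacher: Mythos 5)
You correctly set up the componentwise reduction, the use of Lemma \ref{lem:functional}, the intertwining identity $\pi_k\circ\psi_{\mathbb{V}}=\psi_{\mathbb{R}}\circ\pi_k$, and the identity-element argument for the hypercomplex case, and you correctly identify the central obstacle: a hidden unit $\psi_{\mathbb{V}}(\sum_j w_{ijk}x_j+b_{ik})$ leaks into every basis direction, not just $\boldsymbol{e}_k$. But none of your three proposed resolutions of this cross-talk actually closes the gap. The ``second round'' correction regresses: each corrective block introduces new off-diagonal leakage of the same order, with no contraction. The ``prescribe each block's $\ell$th component independently'' idea overclaims what Lemma \ref{lem:functional} provides: the lemma is a \emph{bijection} between $\mathbb{V}^N$ and linear functionals for a \emph{fixed} projection index, so once the $w_{ijk}$ are chosen to realize $\mathcal{L}_{ik}$ through $\pi_k$, the functionals $\boldsymbol{x}\mapsto\pi_\ell\big(\sum_j w_{ijk}x_j\big)$ for $\ell\neq k$ are completely determined --- there are no remaining degrees of freedom, and an ``$n^2$-fold use'' of the lemma is not available. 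The ``one big real approximation'' route fails for the same reason: with scalar output weights, all $n$ components of a hidden unit share the same $w_{ijk}$ and the same $\alpha_{ik}$, so the $n$ output components cannot be decoupled by weight choices alone.

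The missing idea is the paper's bias construction: set $b_{ik}=b_{ik}^{(\mathbb{R})}\boldsymbol{e}_k+\sum_{\ell\neq k}\lambda\boldsymbol{e}_\ell$ and let $\lambda\to\pm\infty$. Since $K$ is compact, the off-diagonal pre-activations $\pi_\ell\big(\sum_j w_{ijk}x_j\big)$ are uniformly bounded, so the hypothesis $\lim_{t\to-\infty}\psi_{\mathbb{R}}(t)=0$ (or $\lim_{t\to+\infty}\psi_{\mathbb{R}}(t)=0$) forces the entire remainder $\mathcal{R}(\boldsymbol{x},\lambda)=\sum_{k}\sum_i\alpha_{ik}\sum_{\ell\neq k}\psi_{\mathbb{R}}\big(\pi_\ell(\sum_j w_{ijk}x_j)+\lambda\big)\boldsymbol{e}_\ell$ below $\varepsilon/2$ uniformly on $K$ for $|\lambda|$ large. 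You in fact misattribute this hypothesis to the real multivariate UAT (which needs only the Tauber--Wiener property); its actual role is precisely to annihilate the cross-talk you flagged. Without this step your proof does not go through, even though everything else --- including the $\mathcal{H}_{\mathbb{H}}\supseteq\mathcal{H}_{\mathbb{V}}$ argument via $\alpha_i\boldsymbol{1}_{\mathbb{H}}$ --- matches the paper.
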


\begin{proof}[Proof of Theorem \ref{thm:tau_geral}]
{
First of all, recall that $\mathcal{H}_{\mathbb{V}}$ is dense in $C(K)$ if, and only if, given a continuous vector-valued function $f_{\mathbb{V}}:K\subset \mathbb{V}^N \to \mathbb{V}$ and $\varepsilon>0$, there exists a V-MLP network $\mathcal{N}_{\mathbb{V}}:\mathbb{V}^N \to \mathbb{V}$ given by \eqref{eq:V-MLP} with $w_{ij} \in \mathbb{V}$, $b_i \in \mathbb{V}$, and $\alpha_i \in \mathbb{R}$, for all $i=1,\ldots,M$ and $j=1,\ldots,N$ such that
\begin{equation}
|\mathcal{N}_{\mathbb{V}}(\vetx)-f_{\mathbb{V}}(\vetx)|\leq \varepsilon,  \quad \forall \vetx \in K.  
\end{equation}
In the following, we use the approximation property of real-valued MLP networks to construct such a V-MLP network.
}

{
Let $f_{\mathbb{V}}:\mathbb{V}^N \to \mathbb{V}$ be a given continuous vector-valued function and consider $\varepsilon>0$. The vector-valued function $f_{\mathbb{V}}$ can be written as follows
\begin{eqnarray}
    f_{\mathbb{V}}(\vetx) = \sum_{k=0}^{n-1} f_k(\vetx)\boldsymbol{e}_k, \quad \forall \vetx=(x_1,\ldots,x_N) \in K \subset \mathbb{V}^N,
\end{eqnarray}
where $f_k:K \to \mathbb{R}$ denotes the $k$th componet of $f_{\mathbb{V}}$, for all $k=0,\ldots,n-1$. Because $\psi_{\mathbb{R}}:\mathbb{R} \to \mathbb{R}$ is a Tauber-Wiener function, it yields approximation capability to real-valued MLP networks (see Theorem 3 in \cite{chen1995universal} or Step 2 in the proof of Theorem 1 in \cite{leshno1993multilayer}). Thus, {using the isomorphism $\varphi:\mathbb{V}^N \to \mathbb{R}^{nN}$}, there exist real-valued MLP networks $\mathcal{N}_k:\mathbb{R}^{nN}\to \mathbb{R}$ such that
\begin{equation}
    \label{eq:N_k2f_k}
    |\mathcal{N}_k\big(\varphi(\vetx)\big)-f_k(\vetx)| \leq \frac{\varepsilon}{2 \sqrt{n}}, 
\end{equation}
for all $\vetx \in K$ and $k=0,\ldots,n-1$.
Equivalently\footnote{{See Appendix \ref{ssec:approximation-traditional-networks} for further details on the approximation capability of traditional neural networks.}}, there exist $M_k \in \mathbb{N}$, $\alpha_{ik}\in \mathbb{R}$, $b_{ik}^{(\mathbb{R})} \in \mathbb{R}$, and linear functionals $\mathcal{L}_{ik}:\mathbb{R}^{nN}\to \mathbb{R}$ such that
\begin{eqnarray}
    \left|\sum_{i=1}^{M_k}{\alpha_{ik}}
    \psi_{\mathbb{R}}\left(\mathcal{L}_{ik}\big( \varphi(\vetx)\big)+b_{ik}^{(\mathbb{R})}\right) -f_k(\vetx)\right| \leq \frac{\varepsilon}{2 \sqrt{n}},
\end{eqnarray}
for all $\vetx \in K$ and $k=0,\ldots,n-1$. By hypothesis, the algebra $\mathbb{V}$ is {non-degenerate with respect to $\basis$}. From Lemma \ref{lem:functional}, there exist parameters $w_{ijk} \in \mathbb{V}$ such that
\begin{eqnarray}
    \left|\sum_{i=1}^{M_k}{\alpha_{ik}} \psi_{\mathbb{R}}\left(\pi_k\left(\sum_{j=1}^N w_{ijk} x_j\right)+b_{ik}^{(\mathbb{R})}\right) -f_k(\vetx)\right| \leq \frac{\varepsilon}{2 \sqrt{n}},
\end{eqnarray}
We define the V-MLP network by the following equation for all $\vetx=(x_1,\ldots,x_N)\in \mathbb{V}^N$,
\begin{eqnarray} \label{eq:V-MLP_proof}
    \mathcal{N}_{\mathbb{V}}(\vetx) = \sum_{k=0}^{n-1} \sum_{i=1}^{M_k} \alpha_{ik} \psi_{\mathbb{V}}\left( \sum_{j=1}^N w_{ijk}x_j + b_{ik}\right), 
\end{eqnarray}
where the bias terms $b_{ik} \equiv b_{ik}(\lambda) \in \mathbb{V}$ are defined as follows using a parameter $\lambda \in \mathbb{R}$:
\begin{equation}
    b_{ik} = b_{ik}^{(\mathbb{R})}\boldsymbol{e}_k + \sum_{\ell \neq k}^{n-1} \lambda \boldsymbol{e}_\ell.
\end{equation}
Note that \eqref{eq:V-MLP_proof} is equivalent to \eqref{eq:V-MLP} if we combine the two sums ``$\sum_{k=0}^{n-1}$'' and ``$\sum_{i=1}^{M_k}$'' into a single sum ``$\sum_{i=1}^M$'', with $M=M_0+\ldots+M_{n-1}$, and redefine the indexes accordingly. Also, note that the following identity holds for all $\ell=0,\ldots,n-1$:
\begin{equation} \label{eq:theta_ik}
    \pi_\ell(b_{ik}) =\begin{cases}
        b_{ik}^{(\mathbb{R})}, & \ell = k,\\
        \lambda, & \text{otherwise}.
    \end{cases}
\end{equation}
}

{We shall conclude the proof by showing that 
\begin{equation}
  \left|\mathcal{N}_{\mathbb{V}}(\vetx)-f_{\mathbb{V}}(\vetx)\right|\leq \varepsilon, \quad \forall \vetx \in K.  
\end{equation}
Before, however, let us rewrite $\mathcal{N}_{\mathbb{V}}$ using the real-valued MLP networks $\mathcal{N}_k:\mathbb{R}^{nN} \to \mathbb{R}$, for $k=0,\ldots,n-1$, which approximate the components of $f_{\mathbb{V}}$. Accordingly, from \eqref{eq:pi-f},  recalling that $\psi_{\mathbb{V}}:\mathbb{V} \to \mathbb{V}$ is a split activation function derived from $\psi_{\mathbb{R}}:\mathbb{R} \to \mathbb{R}$, and using \eqref{eq:theta_ik}, we have
\begin{align}
    \mathcal{N}_{\mathbb{V}}(\vetx) 
    &= \sum_{k=0}^{n-1} \sum_{i=1}^{M_k} \alpha_{ik} \left[\sum_{\ell=0}^{n-1} (\pi_\ell \circ \psi_{\mathbb{V}})\left(\sum_{j=1}^N w_{ijk}x_j + b_{ik}\right) \boldsymbol{e}_\ell \right] \\
    &= \sum_{k=0}^{n-1} \sum_{i=1}^{M_k} \alpha_{ik} \left[\sum_{\ell=0}^{n-1} ( \psi_{\mathbb{R}} \circ \pi_\ell)\left(\sum_{j=1}^N w_{ijk}x_j + b_{ik}\right) \boldsymbol{e}_\ell \right] \\
    &= \sum_{k=0}^{n-1} \sum_{i=1}^{M_k} \alpha_{ik} \left[\psi_{\mathbb{R}} \left(\pi_k\left(\sum_{j=1}^N w_{ijk}x_j\right)+b_{ik}^{(\mathbb{R})}\right) \boldsymbol{e}_k \right.  \label{eq:xx} \\
    & \nonumber \qquad + \left.\sum_{\ell\neq k}^{n-1} \psi_{\mathbb{R}}\left( \pi_\ell \left(\sum_{j=1}^N w_{ijk}x_j\right) + \lambda\right) \boldsymbol{e}_\ell \right] \\
    &= \sum_{k=0}^{n-1} (\mathcal{N}_k\circ\varphi)(\vetx) \boldsymbol{e}_k + \mathcal{R}(\vetx,\lambda), \label{eq:yy}
\end{align}
where 
\begin{eqnarray}
    (\mathcal{N}_k \circ \varphi)(\vetx) = \sum_{i=1}^{M_k} \alpha_{ik} \psi_{\mathbb{R}}\left( \mathcal{L}_{ik}\big(\varphi(\vetx)\big)+b_{ik}^{(\mathbb{R})}\right),
\end{eqnarray}
corresponds to the real-valued MLP network that approximates the component $f_k$ of $f_{\mathbb{V}}$ and 
\begin{eqnarray}
    \mathcal{R}(\vetx,\lambda) = \sum_{k=0}^{n-1} \sum_{i=1}^{M_k} \alpha_{ik} \sum_{\ell \neq k}^{n-1} \psi_{\mathbb{R}}\left( \pi_\ell\left( \sum_{j=1}^N w_{ijk} x_j \right)+\lambda\right) \boldsymbol{e}_\ell \in \mathbb{V},
\end{eqnarray}
is a remainder term that depends on $\vetx \in \mathbb{V}^N$ and $\lambda \in \mathbb{R}$. Being $K \subset \mathbb{V}$ a compact and $\pi_\ell:\mathbb{V} \to \mathbb{R}$ continuous, the set
\begin{equation}
    B = \left\{ \pi_\ell\left(\sum_{j=1}^N w_{ijk}x_j \right): \vetx \in K\right\} \subset \mathbb{R},
\end{equation}
is bounded by the Weierstrass extreme value theorem. By hypothesis, the activation function $\psi_{\mathbb{R}}$ satisfies {$\lim_{t \to -\infty}\psi_{\mathbb{R}}(t)=0$ or $\lim_{t \to +\infty}\psi_{\mathbb{R}}(t)=0$. Thus, the remainder $\mathcal{R}(\vetx,\lambda)$ approaches zero as $\lambda$ increases or decreases, depending on the asymptotic behavior of $\psi_{\mathbb{R}}$.} Formally, there exists $\Lambda > 0$ such that
\begin{eqnarray}
    \label{eq:R_0}
    |\mathcal{R}(\vetx,\lambda)|\leq \frac{\varepsilon}{2}, \quad \forall \vetx \in K \; {\text{and either}\; \lambda \leq -\Lambda \text{ or } \lambda \geq \Lambda}.
\end{eqnarray}
Concluding, using the triangle inequality, the identity \eqref{eq:absolute_value}, and the inequalities \eqref{eq:R_0} and \eqref{eq:N_k2f_k}, respectively, we obtain 
\begin{align}
    \left| \mathcal{N}_{\mathbb{V}}(\vetx)-f_{\mathbb{V}}(\vetx)\right| & = 
    \left| \sum_{k=0}^{n-1} (\mathcal{N}_k\circ\varphi)(\vetx) \boldsymbol{e}_k + \mathcal{R}(\vetx,\lambda) - \sum_{k=0}^{n-1}f_k(\vetx)\boldsymbol{e}_k \right| \\
    &\leq \left| \sum_{k=0}^{n-1} \left( (\mathcal{N}_k\circ\varphi)(\vetx) - f_k(\vetx)\right)\boldsymbol{e}_k\right| + \left|\mathcal{R}(\vetx,\lambda)\right| \\
    & \leq \sqrt{{n}\frac{\varepsilon^2}{{4n}}}+ \frac{\varepsilon}{2} = \varepsilon,
\end{align}
for all $\vetx \in K$ and {either $\lambda \leq -\Lambda$ or $\lambda \geq \Lambda$}.
}

{
Finally, let us prove the second part of the theorem. Because $\mathcal{H}_{\mathbb{V}}$ is dense in the set $\mathcal{C}(K)$ of all continuous functions from $K \subseteq \mathbb{V}^N$ to $\mathbb{R}$, given $f_{\mathbb{V}}:\mathbb{V}^N \to \mathbb{V}$ and $\varepsilon>0$, there exists a V-MLP $\mathcal{N}_{\mathbb{V}}:\mathbb{V}^N \to \mathbb{V}$ given by \eqref{eq:V-MLP} with $M \in \mathbb{N}$, $\alpha_i^{(\mathbb{R})} \in \mathbb{R}$, $w_{ij} \in \mathbb{V}$ and $b_i \in \mathbb{V}$, for all $i=1,\ldots,M$ and $j=1,\ldots,N$, such that $|\mathcal{N}_{\mathbb{V}}(\vetx)-f_{\mathbb{V}}(\vetx)| \leq \varepsilon$ for all $\vetx \in K$. However, if $\mathbb{V} \equiv \mathbb{H}$ is a hypercomplex algebra, then there exists $\boldsymbol{1}_{\mathbb{H}}$ such that $\boldsymbol{1}_{\mathbb{H}}x = x$ for all $x \in \mathbb{H}$. Thus, defining $\alpha_i = \alpha_i^{(\mathbb{R})} \boldsymbol{1}_{\mathbb{H}} \in \mathbb{H}$, we conclude that the hypercomplex-valued neural network $\mathcal{N}_{\mathbb{H}}:\mathbb{H}^N \to \mathbb{H}$ given by \eqref{eq:V-MLP} satisfies the following identities:
\begin{equation}
    \mathcal{N}_{\mathbb{H}}(\vetx) = \sum_{i=1}^M \alpha_i \psi\left(\sum_{j=1}^N w_{ij}x_j +b_j \right) = \sum_{i=1}^M \alpha_i^{(\mathbb{R})} \psi\left(\sum_{j=1}^N w_{ij}x_j +b_j \right).
\end{equation}
Therefore, it is equivalent to the V-MLP with real-valued output weights that approximate the hypercomplex-valued continuous function $f_{\mathbb{H}}$.
}
\end{proof}

We would like to point out that our proof is based on the universal approximation theorem for quaternion-valued MLP networks with scalar output weights by Arena et al. \cite{ARENA1997335}. However, we generalize Arena and collaborators' results in two folds: First, we generalize the approximation theorem for general V-MLP networks with scalar output weights. Second, we show that the approximation capability holds for hypercomplex-valued MLP networks with hypercomplex output weights. 

\begin{remark}
{Theorem \ref{thm:tau_geral} may be expanded to include a wider range of V-MLP networks with vector-valued output weights, provided that additional assumptions about the algebra are imposed. These assumptions should justify, in particular, the transition from \eqref{eq:xx} to \eqref{eq:yy} by considering the multiplication of the basis elements.}
\end{remark}






\section{Numerical Examples}
\label{sec:examples}

Let us now illustrate the approximation capability of MLP networks using vector- and hypercomplex-valued algebras in two and four dimensions.

\subsection{Numerical Example with Two-Dimensional Algebras} \label{sec:2D_Examples}

Complex, hyperbolic, and dual numbers are hypercomplex algebras of dimension 2, i.e., the elements of these algebras are of the form {$x = \xi_{0} + \xi_{1}\ii$}. They differ in the value of $\ii^2$, the square of the hyperimaginary unit. The most well-known of these two-dimensional (2D) hypercomplex algebras is the complex numbers where $\ii^2 = -1$. Complex numbers are effectively used in signal processing, physics, electromagnetism, and electrical and electronic circuits. In contrast, the hyperbolic unit satisfies $\ii^2 = 1$. Hyperbolic numbers have important connections with abstract algebra, ring theory, and special relativity \cite{Catoni2008TheSpace-Time}. Lastly, dual numbers are hypercomplex algebra in which the imaginary unit satisfies $\ii^2 = 0$.

{
In the most general case, the elements of a two-dimensional algebra can be written as {$x= \xi_0 \boldsymbol{e}_0 + \xi_1 \boldsymbol{e_1}$}, where $\xi_0,\xi_1 \in \mathbb{R}$ are scalars and $\boldsymbol{e}_0,\boldsymbol{e}_1$ are the basis elements. The product of two basis elements satisfy
$\boldsymbol{e}_i \boldsymbol{e}_j = p_{ij,0} \boldsymbol{e}_0 + p_{ij,1} \boldsymbol{e}_1$, for all $i,j \in \{0,1\}$. From \eqref{eq:bilinear_form}, the product of $x$ and $y$ satisfies $xy = \mathcal{B}_0(x,y)\boldsymbol{e}_0 + \mathcal{B}_1(x,y)\boldsymbol{e}_1$, where the matrices associated with the bilinear forms $\mathcal{B}_0$ and $\mathcal{B}_1$ are given by
\begin{equation}
    \label{eq:matrices-2D}
    \boldsymbol{B}_0 = \begin{bmatrix}
        p_{00,0} & p_{01,0} \\ p_{10,0} & p_{11,0}
    \end{bmatrix} \qeq
    \boldsymbol{B}_1 = \begin{bmatrix}
        p_{00,1} & p_{01,1} \\ p_{10,1} & p_{11,1}
    \end{bmatrix},
\end{equation}
respectively. Table \ref{tab:2D} contains examples of the matrices $\boldsymbol{B}_0$ and $\boldsymbol{B}_1$ of some two-dimensional algebras. 
}

{
We would like to point out that Table \ref{tab:2D} contains known algebras, such as the complex numbers ($\mathbb{C}$) and dual numbers ($\mathbb{D}$), as well as new algebras introduced for illustrative purposes. Namely, the algebras denoted by $\mathbb{A}$ and $\mathbb{B}$ have no identity; thus, they are examples of algebras that are not hypercomplex. Furthermore, the algebra entitled ``equivalent to dual numbers'', denoted by $\mathbb{E}$, is obtained from the dual numbers by defining $\boldsymbol{e}_0 = 1+\ii$ and $\boldsymbol{e}_1 = 1-\ii$, with $\ii^2=0$. 
}

{
Note from Table \ref{tab:2D} that the algebra denoted by $\mathbb{A}$, the complex numbers ($\mathbb{C}$), and the algebra equivalent to the dual numbers are non-degenerate. In contrast, the dual numbers ($\mathbb{D}$) and the algebra denoted by $\mathbb{B}$ are degenerate because their matrices $\boldsymbol{B}_0$ and $\boldsymbol{B}_1$, respectively, are singular. It is worth noting that a degenerate algebra (e.g., dual numbers, $\mathbb{D}$) can be transformed into a non-degenerate algebra (the algebra $\mathbb{E}$ equivalent to the dual numbers) by a change of basis. Thus, the degeneracy of an algebra is not an invariant concept; it can be avoided by choosing appropriate basis elements. Also, note that the matrices $\boldsymbol{B}_0$ and $\boldsymbol{B}_1$ of the non-degenerate algebra $\mathbb{A}$ in Table \eqref{tab:2D} coincide with the identity matrix. Hence, the multiplication of $x=\xi_0\boldsymbol{e}_0+\xi_1\boldsymbol{e}_1$ and $y=\eta_0\boldsymbol{e}_0+\eta_1\boldsymbol{e}_1$ yields $\mathcal{B}(x,y)(\boldsymbol{e}_0+\boldsymbol{e}_1)$, where $\mathcal{B}(x,y)=\xi_0\eta_0+\xi_1\eta_1$ agrees with the usual inner product. Despite the components being tied in the outcome of the multiplication, neural networks in this algebra exhibit approximation capability, as we will see below.
}

\begin{table}
    \label{tab:2D}
    \centering
    \begin{tabular}{c|cc}
       \textbf{Algebra}  & $\boldsymbol{B}_0$ & $\boldsymbol{B}_1$ \\ \toprule
    Non-degenerate Algebra ($\mathbb{A}$) & 
    $\begin{bmatrix} 1 & 0 \\ 0 & 1 \end{bmatrix}$ &
    $\begin{bmatrix} 1 & 0 \\ 0 & 1 \end{bmatrix}$ \\ \midrule
    Degenerate Algebra ($\mathbb{B}$) & 
    $\begin{bmatrix} 1 & 0 \\ 1 & 1 \end{bmatrix}$ &
    $\begin{bmatrix} 1 & 1 \\ 1 & 1 \end{bmatrix}$ \\ \midrule   
    Complex Numbers ($\mathbb{C}$) & 
    $\begin{bmatrix} 1 & 0 \\ 0 & -1 \end{bmatrix}$ &
    $\begin{bmatrix} 0 & 1 \\ 1 & 0 \end{bmatrix}$ \\ \midrule
    Dual Numbers ($\mathbb{D}$) & 
    $\begin{bmatrix} 1 & 0 \\ 0 & 0 \end{bmatrix}$ &
    $\begin{bmatrix} 0 & 1 \\ 1 & 0 \end{bmatrix}$ \\ \midrule
    Equivalent to Dual Numbers ($\mathbb{E}$) & 
    $\begin{bmatrix} 3/2 & 1/2 \\ 1/2 & -1/2 \end{bmatrix}$ &
    $\begin{bmatrix} -1/2 & 1/2 \\ 1/2 & 3/2 \end{bmatrix}$ \\ \midrule
    \end{tabular} 
    \caption{Matrices associated with the bilinear form of some 2D algebras.}
    \label{tab:2D_algebras}
\end{table}

Let us illustrate Theorem \ref{thm:tau_geral} with a numerical example based on the two-dimensional algebras listed in Table \ref{tab:2D_algebras}. To this end, let $\mathcal{E}=\{\boldsymbol{e}_0,\boldsymbol{e}_1\}$ denote a basis for a two-dimensional algebra $\mathbb{V}$ and consider the non-linear continuous function $f_{\mathbb{V}}:\mathbb{V} \to \mathbb{V}$ defined by
\begin{equation}
    \label{eq:f-example-2D}
    f_{\mathbb{V}}(x) = (\xi_0^2 - \xi_1^2)\boldsymbol{e}_0 + (\xi_0^2+\xi_0\xi_1 + \xi_1^2)\boldsymbol{e}_1,\quad \forall x \in K,
\end{equation}
where 
\begin{equation}
    K = \{x=\xi_0\boldsymbol{e}_0+\xi_1\boldsymbol{e}_1:-1\leq \xi_0 \leq +1, -1\leq \xi_1 \leq +1\},
\end{equation}
is a compact subset of $\mathbb{V}$. Note that the components of $f_{\mathbb{V}}$ are quadratic forms, which are continuous but non-linear, on the components $x_0$ and $x_1$ of $x=\xi_0 \boldsymbol{e}_0+\xi_1\boldsymbol{e}_1$. The non-linear function $f_{\mathbb{V}}$ given by \eqref{eq:f-example-2D} serves as a simple example to illustrate the approximation capability of vector- and hypercomplex-valued neural networks because it fulfills the hypothesis of Theorem \ref{thm:tau_geral} but it poses mild difficulty to be approximated.

\begin{figure}
    \centering
    \includegraphics[width=\columnwidth]{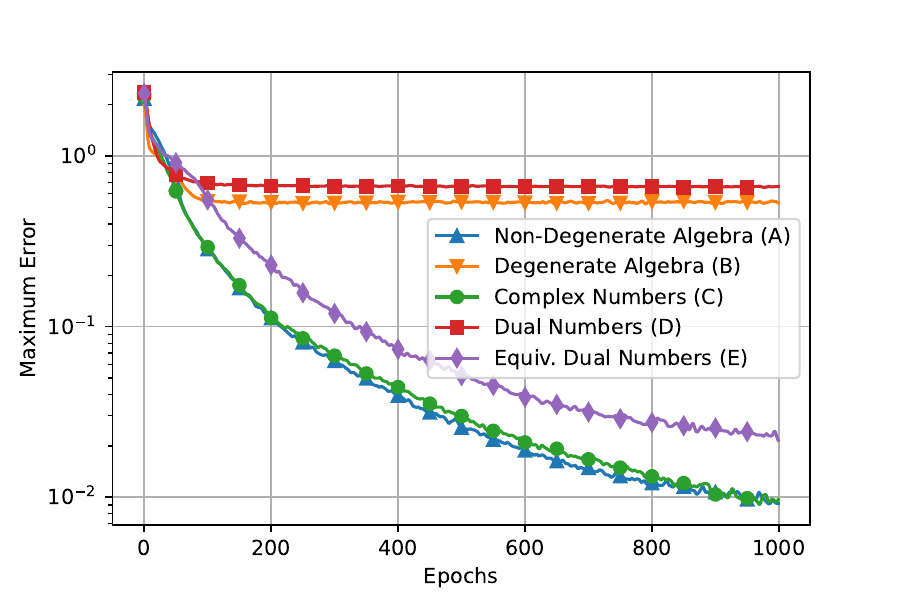}
    \caption{MSE by the number of epochs during the training phase of V-MLP networks with real-valued output weights defined on two-dimensional algebras.}
    \label{fig:Experiment-2D-R}
\end{figure}

In order to validate Theorem \ref{thm:tau_geral}, we implemented V-MLP networks using the five algebras shown in Table \ref{tab:2D_algebras}. The networks have real-valued output weights and 128 hidden neurons with the split \texttt{ReLU} activation function\footnote{Note that the rectified linear unit activation function satisfies $\lim_{t \to -\infty} \mathtt{ReLU}(t)=0$.}. They have been trained using a dataset $\mathcal{T}_{\mathbb{V}} = \{(x_i,f_{\mathbb{V}}(x_i)):i=1,\ldots,1024\} \subset \mathbb{V}\times \mathbb{V}$ consisting of 1024 samples taken from $K$ using a uniform distribution. Furthermore, the five V-MLP networks have been trained by minimizing the mean-squared error (MSE) using the Adam optimizer for 1000 epochs. {
For comparison purposes, we also implemented a traditional (real-valued) MLP network with 128 hidden neurons but trained on the dataset $\mathcal{T}_{\mathbb{R}}=\{\big(\varphi(x_i),\varphi(f_{\mathbb{V}}(x_i))\big):i=1,\ldots,1024\} \subset \mathbb{R}^2 \times \mathbb{R}^2$ derived from $\mathcal{T}_{\mathbb{V}}$ using the isomorphism $\varphi$. Figure \ref{fig:Experiment-2D-R} shows the MSE produced by the V-MLP networks and the traditional model by the number of epochs during the training phase.
}

Note from Figure \ref{fig:Experiment-2D-R} that the MSE decreased consistently for the networks that were based on non-degenerate algebras, namely the networks defined on the algebras $\mathbb{A}$, $\mathbb{C}$ (complex numbers), and $\mathbb{E}$ (the algebra equivalent to dual numbers). {In contrast, the MSE produced by the V-MLP networks based on dual-numbers $\mathbb{D}$ and the degenerate algebra $\mathbb{B}$ stagnated at the values $4.21\times 10^{-2}$ and $1.35\times 10^{-2}$, respectively.} This numerical example confirms the approximation capability of V-MLP networks defined on non-degenerate algebras. It also reveals that Theorem \ref{thm:tau_geral} may fail for a V-MLP network defined on a degenerate algebra.

\begin{figure}
    a) Surfaces of the components of the vector-valued function $f_{\mathbb{V}}$.
    \[\includegraphics[width=\columnwidth]{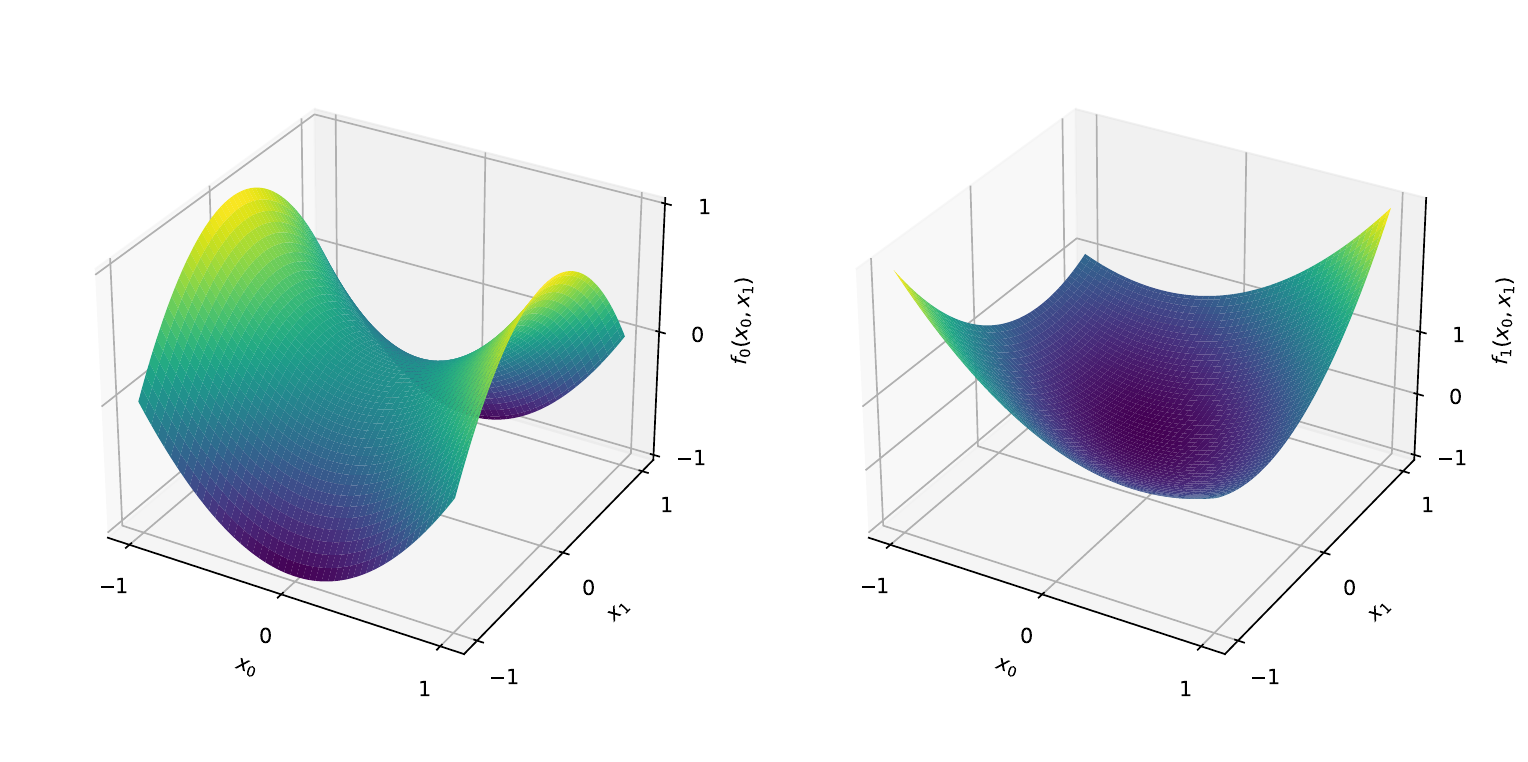}\]
    b) Surfaces of the components of the V-MLP network defined on the algebra $\mathbb{B}$. 
    \[\includegraphics[width=\columnwidth]{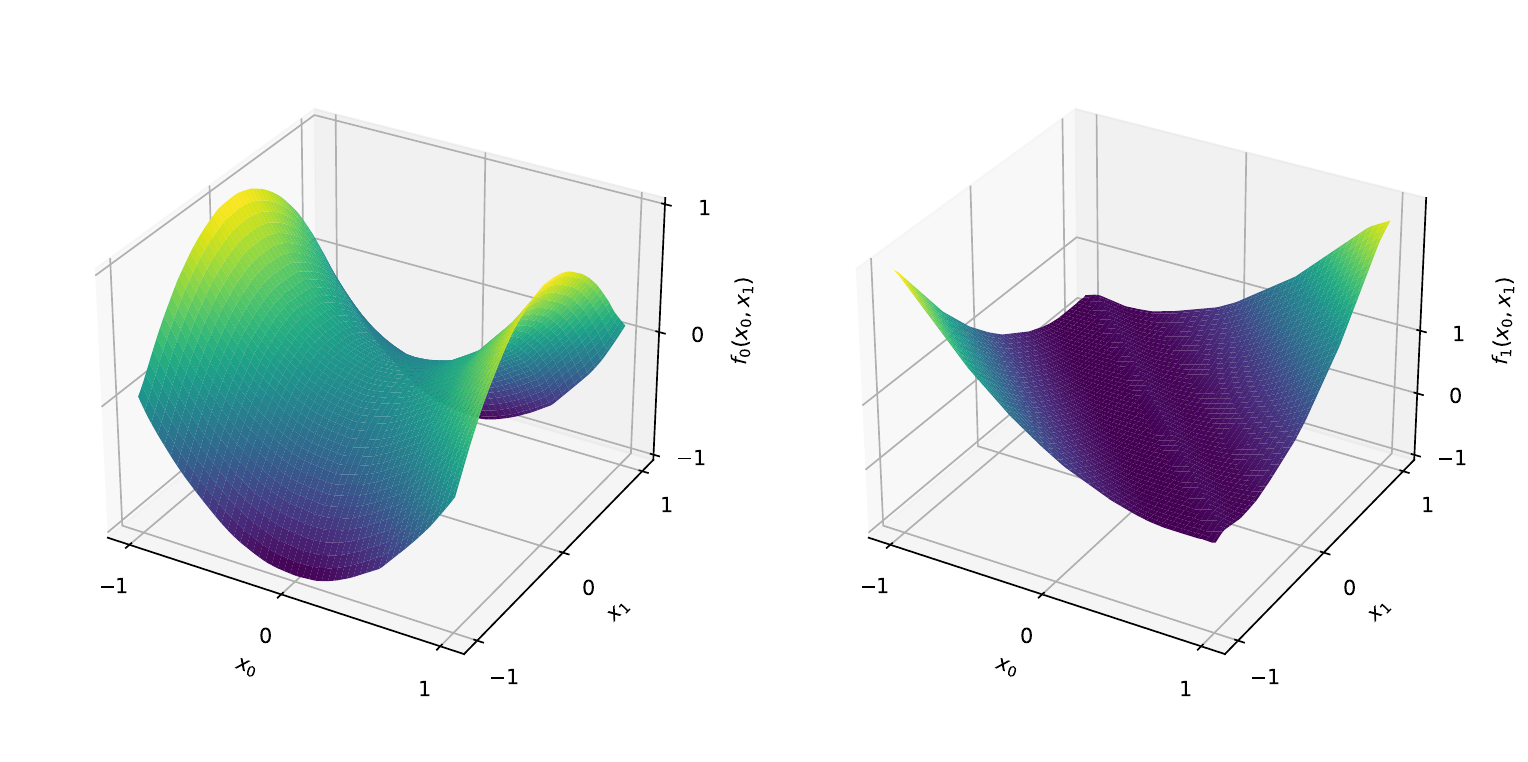}\]
    \caption{Surfaces of the components of the function $f_{\mathbb{V}}$ given by \eqref{eq:f-example-2D} and the V-MLP networks with real-valued output weights defined on a two-dimensional degenerate algebra.}
    \label{fig:Surfaces-2D-R1}
\end{figure}

\begin{figure}
    a) Surfaces of the components of the V-MLP network defined on dual numbers $\mathbb{D}$. 
    \[\includegraphics[width=\columnwidth]{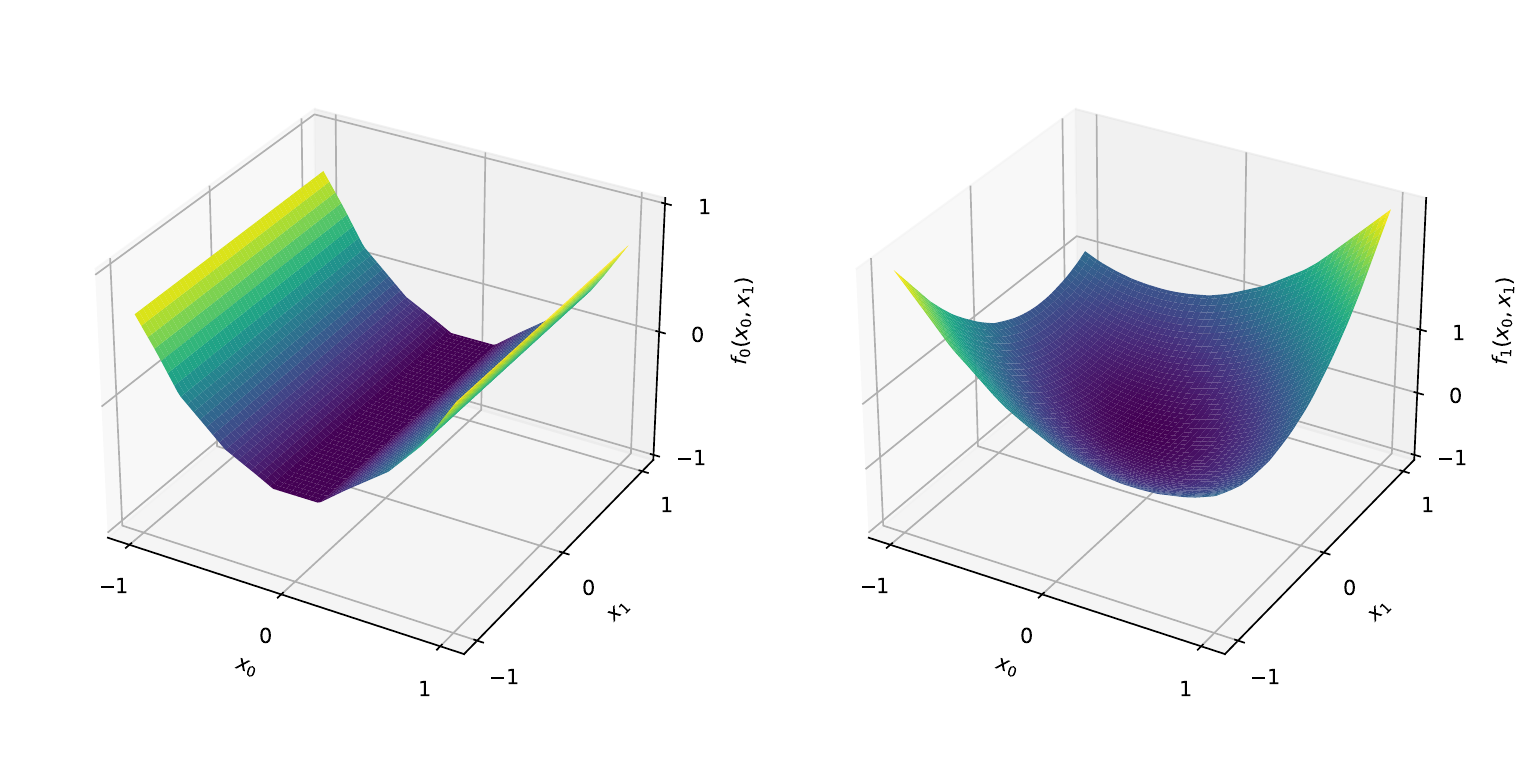}\]
    b) Surfaces of the components of the V-MLP network defined on the algebra $\mathbb{E}$. 
    \[\includegraphics[width=\columnwidth]{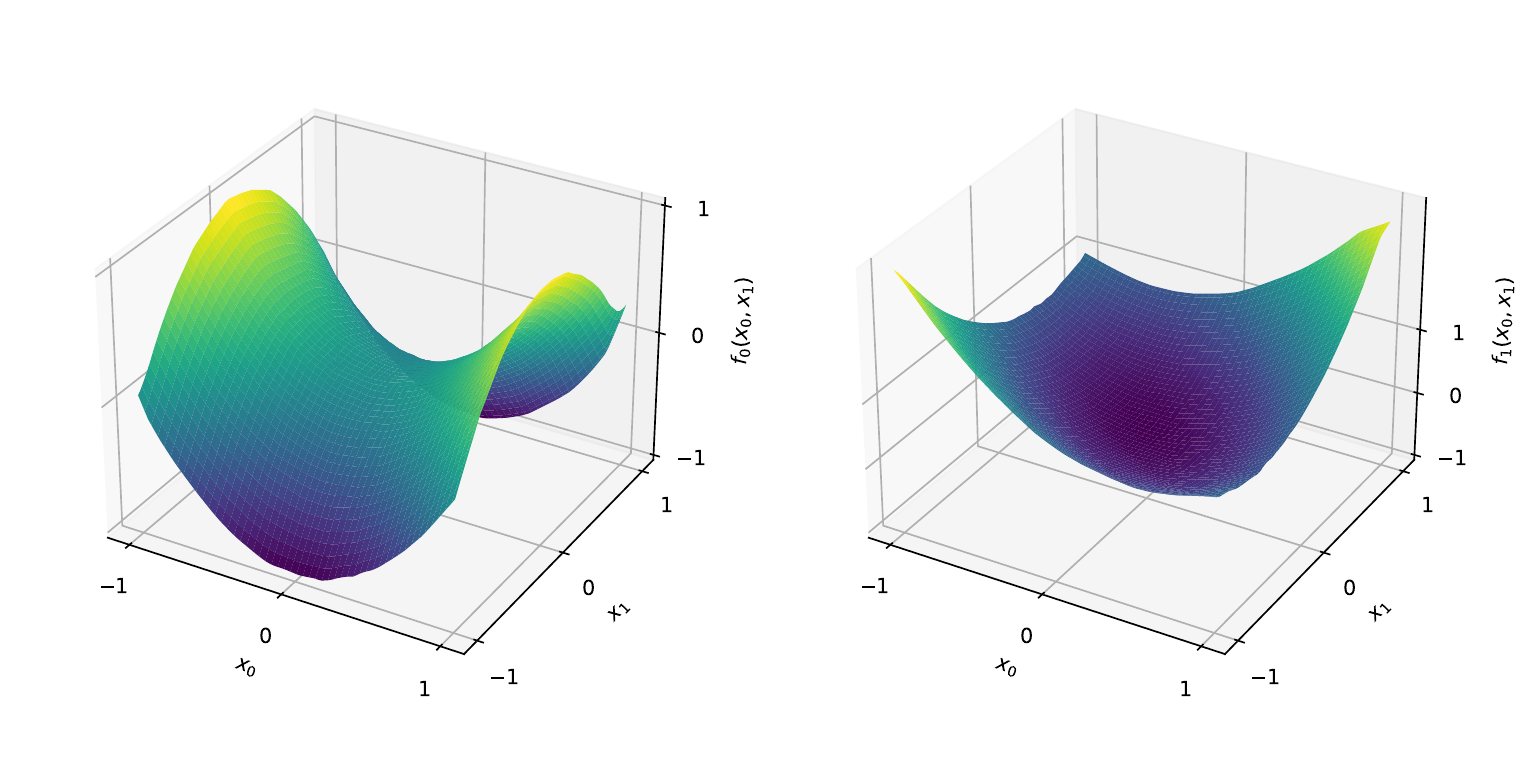}\]
    \caption{Surfaces of the V-MLP networks with real-valued output weights defined on the two-dimensional algebras $\mathbb{D}$ (dual numbers) and $\mathbb{E}$ (equivalent to dual numbers).}
    \label{fig:Surfaces-2D-R2}
\end{figure}

To better illustrate the approximation capability of the V-MLPs, Figures \ref{fig:Surfaces-2D-R1} and \ref{fig:Surfaces-2D-R2} display the surfaces associated with the components of $f_{\mathbb{V}}$ and the trained networks defined on the algebras $\mathbb{B}$, $\mathbb{D}$, and $\mathbb{E}$. Note that the V-MLP networks based on the algebra $\mathbb{E}$ visually coincide with the surfaces of $f_{\mathbb{V}}$, despite producing the largest MSE among the networks defined on non-degenerate algebras. However, the V-MLP networks based on degenerate algebras fail to approximate $f_{\mathbb{V}}$ precisely on the component whose bilinear form is degenerate. Accordingly, in the proof of Theorem \ref{thm:tau_geral}, we utilize the non-degeneracy of the bilinear form to determine the weights in the hidden layer of the V-MLP network. Conversely, the network may fail to approximate a component of the continuous function $f_{\mathbb{V}}$ if the corresponding bilinear form is degenerate.

\begin{figure}
    \centering
    \includegraphics[width=\columnwidth]{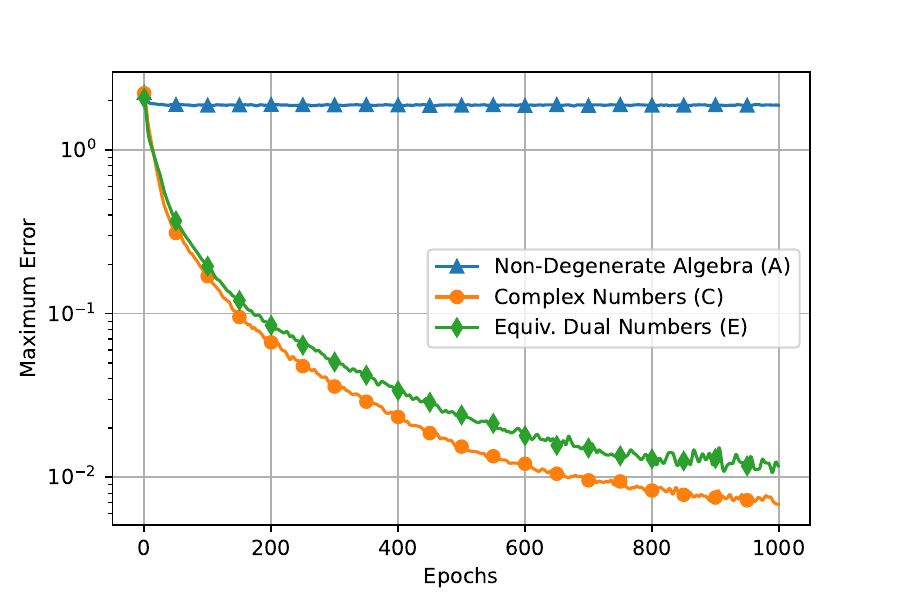}
    \caption{MSE by the number of epochs during the training phase of V-MLP networks with vector-valued output weights defined on non-degenerate two-dimensional algebras.}
    \label{fig:Experiment-2D-V}
\end{figure}

\begin{figure}
    \centering
    \includegraphics[width=\columnwidth]{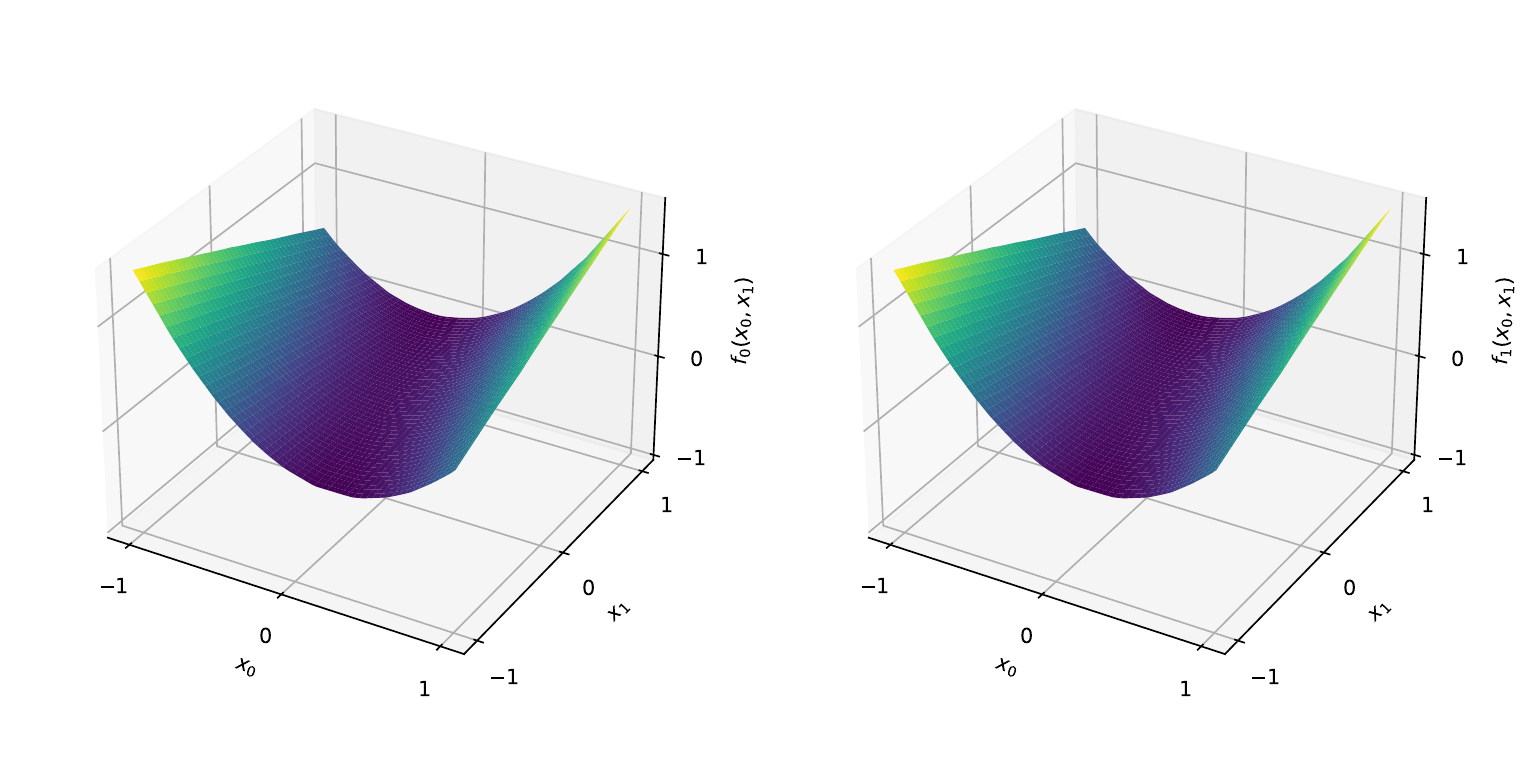}
    \caption{Surfaces of the components of the V-MLP network, with vector-valued output weights, defined on the two-dimensional non-degenerate algebra $\mathbb{A}$.}
    \label{fig:Surfaces-2D-V}
\end{figure}

Finally, to illustrate the second part of Theorem \ref{thm:tau_geral}, we approximated $f_{\mathbb{V}}$ using V-MLP networks with vector-valued output weights on non-degenerate algebras. We used the same network architecture (128 hidden neurons with split \texttt{ReLU} activation function) and training methodology (we minimized the MSE using the Adam method for 1000 epochs). Figure \ref{fig:Experiment-2D-V} shows the MSE by the number of epochs produced by the V-MLP network defined on the non-degenerate algebra $\mathbb{A}$, $\mathbb{C}$, and $\mathbb{E}$. 
{The MSE produced by the traditional MLP network is also depicted in Figure \ref{fig:Experiment-2D-V} as a reference model.}
Notice that the MSE produced by the MLP networks based on the non-degenerate hypercomplex algebras $\mathbb{C}$ and $\mathbb{E}$ decreased consistently. {However, the MSE produced by the network based on the non-degenerate algebra $\mathbb{A}$ (which is not a hypercomplex algebra) stagnated at $2.30\times 10^{-1}$.} Figure \ref{fig:Surfaces-2D-V} shows the surface of the components of the V-MLP defined on the algebra $\mathbb{A}$. As the bilinear forms associated with components of the product coincide in the algebra $\mathbb{A}$, the surfaces of the two components also coincide. 

In conclusion, this second numerical example confirms the approximation capability of MLP networks with hypercomplex-valued output weights defined on non-degenerate algebra. Furthermore, it provides an example of a V-MLP network with vector-valued output weights defined on an algebra without identity that is not able to approximate a given function. 

\subsection{Numerical Example with Four-Dimensional Algebras}
\label{sec:4D_Examples}


{
Let us now address four-dimensional (4D) algebras, which include quaternions \cite{parcollet2020survey}, tessarines \cite{Carniello2021UniversalNetworks,CERRONI2017232,Ortolani2017OnProcessing}, hyperbolic quaternions \cite{Takahashi2021ComparisonControl}, and Klein four-group \cite{huang2013klein,KOBAYASHI2020123}, and many other hypercomplex algebras as particular instances \cite{vieira2022general,Vieira2022AcuteNetworks}. Four-dimensional algebras are particularly interesting because they can be used to model control systems \cite{Takahashi2021ComparisonControl}, color and PolSAR images \cite{parcollet2020survey,shang14}, ambisonic signals \cite{Grassucci2023DualRepresentation}, and proved helpful in solving many image and signal processing tasks.
}

{
The elements in a 4D algebra are of the form $x=\xi_0 \boldsymbol{e}_0+\xi_1\boldsymbol{e}_1+\xi_2\boldsymbol{e}_2+\xi_3\boldsymbol{e}_3$. The multiplication of two 4D elements $x$ and $y$ is given by 
\begin{equation}\label{eq:prod4D}
    xy = \mathcal{B}_0(x,y)\boldsymbol{e}_0+\mathcal{B}_1(x,y)\boldsymbol{e}_1+
    \mathcal{B}_2(x,y)\boldsymbol{e}_2+\mathcal{B}_3(x,y)\boldsymbol{e}_3,
\end{equation}
where $\mathcal{B}_0,\ldots,\mathcal{B}_3$ are bilinear forms whose matrix representation with respect to the basis $\mathcal{E}=\{\boldsymbol{e}_0,\ldots,\boldsymbol{e}_3\}$ satisfies \eqref{eq:B_k}. Table \ref{tab:4D} provides examples of the matrices associated with the bilinear forms of some 4D algebras, including hypercomplex algebras such as quaternions ($\mathbb{Q}$), hyperbolic-quaternions ($\mathbb{HQ}$), and dual-complex numbers ($\mathbb{DC}$). More examples of 4D algebras can be found in \cite{Vieira2022AcuteNetworks,vieira2022general}. In particular, among other 4D hypercomplex algebras, the hyperbolic quaternions and dual-complex numbers have been used to design a servo-level robot manipulator controller by Takahashi \cite{Takahashi2021ComparisonControl}.  
}

{
The first algebra in Table \ref{tab:4D}, denoted by $\mathbb{F}$, is a non-degenerate 4D algebra in which all the bilinear forms coincide. In contrast, $\mathbb{G}$ is a degenerate algebra because the matrix $\boldsymbol{B}_3$ is singular. Neither $\mathbb{F}$ nor $\mathbb{G}$ have an identity element, making them not hypercomplex algebras. The quaternions, denoted by $\mathbb{Q}$, constitute one of the most well-known hypercomplex algebras. Due to their intrinsic relation between rotations in 3D space and the quaternion product, they have numerous applications, from physics to computer vision and control. Hyperbolic quaternions, introduced by MacFarlane, are an example of a non-associative hypercomplex algebra despite their algebraic similarity with quaternions (they differ only in the first bilinear form). Finally, dual-complex numbers are obtained by imposing that the real and the imaginary parts of a complex number are dual numbers. Dual-complex numbers constitute a degenerate algebra because the matrices $\boldsymbol{B}_0$ and $\boldsymbol{B}_1$ are singular. 
}

\begin{table}
    \label{tab:4D}
    \centering
    \begin{tabular}{c|cccc}
       \textbf{Algebra}  & $\boldsymbol{B}_0$ & $\boldsymbol{B}_1$ & $\boldsymbol{B}_2$ & $\boldsymbol{B}_3$ \\ \toprule
    \parbox[c]{7em}{\centering Non-Degenerate Algebra ($\mathbb{F}$)} & 
    {\small $\begin{bmatrix} 
    1 & 0 & 0 & 0 \\ 
    0 & 1 & 0 & 0 \\
    0 & 0 & 1 & 0 \\
    0 & 0 & 0 & 1 
    \end{bmatrix}$} &
    {\small $\begin{bmatrix} 
    1 & 0 & 0 & 0 \\ 
    0 & 1 & 0 & 0 \\
    0 & 0 & 1 & 0 \\
    0 & 0 & 0 & 1 
    \end{bmatrix}$} &
    {\small$\begin{bmatrix} 
    1 & 0 & 0 & 0 \\ 
    0 & 1 & 0 & 0 \\
    0 & 0 & 1 & 0 \\
    0 & 0 & 0 & 1 
    \end{bmatrix}$} & 
    {\small$\begin{bmatrix} 
    1 & 0 & 0 & 0 \\ 
    0 & 1 & 0 & 0 \\
    0 & 0 & 1 & 0 \\
    0 & 0 & 0 & 1 
    \end{bmatrix}$}
      \\ \midrule  
     \parbox[c]{7em}{\centering Degenerate Algebra ($\mathbb{G}$)} & 
     {\small$\begin{bmatrix} 
    1 & 0 & 0 & 0 \\ 
    0 & 1 & 0 & 0 \\
    0 & 0 & 1 & 0 \\
    0 & 0 & 0 & 1 
    \end{bmatrix}$} &
    {\small $\begin{bmatrix} 
    1 & 0 & 0 & 0 \\ 
    1 & 1 & 0 & 0 \\
    1 & 1 & 1 & 0 \\
    1 & 1 & 1 & 1 
    \end{bmatrix}$} &
    {\small $\begin{bmatrix} 
    1 & 1 & 1 & 1 \\ 
    0 & 1 & 1 & 1 \\
    0 & 0 & 1 & 1 \\
    0 & 0 & 0 & 1 
    \end{bmatrix}$} &
    {\small$\begin{bmatrix} 
    1 & 1 & 1 & 1 \\ 
    1 & 1 & 1 & 1 \\
    1 & 1 & 1 & 1 \\
    1 & 1 & 1 & 1 
    \end{bmatrix}$}
      \\ \midrule   
    \parbox[c]{7em}{\centering Quaternions ($\mathbb{Q}$)} & 
    {\small $\begin{bmatrix} 
    1 & 0 & 0 & 0 \\ 
    0 & -1 & 0 & 0 \\
    0 & 0 & -1 & 0 \\
    0 & 0 & 0 & -1 
    \end{bmatrix}$} &
    {\small $\begin{bmatrix} 
    0 & 1 & 0 & 0 \\ 
    1 & 0 & 0 & 0 \\
    0 & 0 & 0 & 1 \\
    0 & 0 & -1 & 0 
    \end{bmatrix}$} &
    {\small$\begin{bmatrix} 
    0 & 0 & 1 & 0 \\ 
    0 & 0 & 0 & -1 \\
    1 & 0 & 0 & 0 \\
    0 & 1 & 0 & 0 
    \end{bmatrix}$} & 
    {\small$\begin{bmatrix} 
    0 & 0 & 0 & 1 \\ 
    0 & 0 & 1 & 0 \\
    0 & -1 & 0 & 0 \\
    1 & 0 & 0 & 0 
    \end{bmatrix}$} \\ \midrule
    \parbox[c]{7em}{\centering Hyperbolic Quaternions ($\mathbb{HQ}$)} & 
    {\small $\begin{bmatrix} 
    1 & 0 & 0 & 0 \\ 
    0 & 1 & 0 & 0 \\
    0 & 0 & 1 & 0 \\
    0 & 0 & 0 & 1 
    \end{bmatrix}$} &
    {\small $\begin{bmatrix} 
    0 & 1 & 0 & 0 \\ 
    1 & 0 & 0 & 0 \\
    0 & 0 & 0 & 1 \\
    0 & 0 & -1 & 0 
    \end{bmatrix}$} &
    {\small$\begin{bmatrix} 
    0 & 0 & 1 & 0 \\ 
    0 & 0 & 0 & -1 \\
    1 & 0 & 0 & 0 \\
    0 & 1 & 0 & 0 
    \end{bmatrix}$} & 
    {\small$\begin{bmatrix} 
    0 & 0 & 0 & 1 \\ 
    0 & 0 & 1 & 0 \\
    0 & -1 & 0 & 0 \\
    1 & 0 & 0 & 0 
    \end{bmatrix}$}
      \\ \midrule
    \parbox[c]{7em}{\centering Dual-Complex Numbers ($\mathbb{DC}$)} & 
    {\small $\begin{bmatrix} 
    1 & 0 & 0 & 0 \\ 
    0 & -1 & 0 & 0 \\
    0 & 0 & 0 & 0 \\
    0 & 0 & 0 & 0 
    \end{bmatrix}$} &
    {\small $\begin{bmatrix} 
    0 & 1 & 0 & 0 \\ 
    1 & 0 & 0 & 0 \\
    0 & 0 & 0 & 0 \\
    0 & 0 & 0 & 0 
    \end{bmatrix}$} &
    {\small$\begin{bmatrix} 
    0 & 0 & 1 & 0 \\ 
    0 & 0 & 0 & -1 \\
    1 & 0 & 0 & 0 \\
    0 & -1 & 0 & 0 
    \end{bmatrix}$} & 
    {\small$\begin{bmatrix} 
    0 & 0 & 0 & 1 \\ 
    0 & 0 & 1 & 0 \\
    0 & 1 & 0 & 0 \\
    1 & 0 & 0 & 0 
    \end{bmatrix}$}
      \\ \midrule
    \end{tabular} 
    \caption{Matrices associated with the bilinear form of some 4D algebras.}
    \label{tab:4D_algebras}
\end{table}

In analogy to the numerical examples given in the previous section, this section illustrates Theorem \ref{thm:tau_geral} using the four-dimensional algebras shown in Table \ref{tab:4D_algebras}. Precisely, let $\mathcal{E}=\{\boldsymbol{e}_0,\boldsymbol{e}_1,\boldsymbol{e}_2,\boldsymbol{e}_3\}$ be a basis for a four-dimensional algebra $\mathbb{V}$ and consider the vector-valued function $f_{\mathbb{V}}:\mathbb{V}\to\mathbb{V}$ defined by 
\begin{equation}
    \label{eq:f-example-4D}
    f_{\mathbb{V}}(x) = (\xi_0^2+\xi_1\xi_2+\xi_3^2)\boldsymbol{e}_0+ 
    (\xi_1^2+\xi_2\xi_3+\xi_0^2)\boldsymbol{e}_1+
    (\xi_2^2+\xi_3\xi_0+\xi_1^2)\boldsymbol{e}_2+
    (\xi_3^2+\xi_0\xi_1+\xi_2^2)\boldsymbol{e}_3,
\end{equation}
for all $x \in K$, where 
\begin{equation}
    K = \{x=\xi_0\boldsymbol{e}_0+\xi_1\boldsymbol{e}_1+\xi_2\boldsymbol{e}_2+\xi_3\boldsymbol{e}_3: \xi_0,\xi_1,\xi_2,\xi_3 \in [-1,1]\} \subset \mathbb{V},
\end{equation}
is a compact set. Like the function considered previously, the components of $f_{\mathbb{V}}$ given by \eqref{eq:f-example-4D} are quadratic forms on the components $\xi_0,\xi_1,\xi_2$ and $\xi_3$ of $x$. Therefore, $f_{\mathbb{V}}$ is a continuous but non-linear vector-valued function. {Like the previous example, the function $f_{\mathbb{V}}$ is chosen as it is simple yet poses mild difficulty to be approximated by V-MLP networks, which makes it appropriate for illustrating Theorem \ref{thm:tau_geral}.}

Using the four-dimensional algebras listed in Table \ref{tab:4D_algebras}, we implemented V-MLP networks with 128 hidden neurons with the split \texttt{ReLU} activation function. {For comparison purposes, we also implemented a traditional (real-valued) MLP network with the same architecture}.
All the networks have been trained by minimizing the mean-squared error over a dataset with 1024 samples taken uniformly from $K$ and using the Adam optimizer for 1000 epochs. Figures \ref{fig:Experiment-4D-R} and \ref{fig:Experiment-4D-V} show the MSE by the number of epochs for V-MLP networks with real-valued and vector-valued output weights, respectively. {These figures also show the MSE produced by the traditional MLP network.} 

\begin{figure}
    \centering
    \includegraphics[width=\columnwidth]{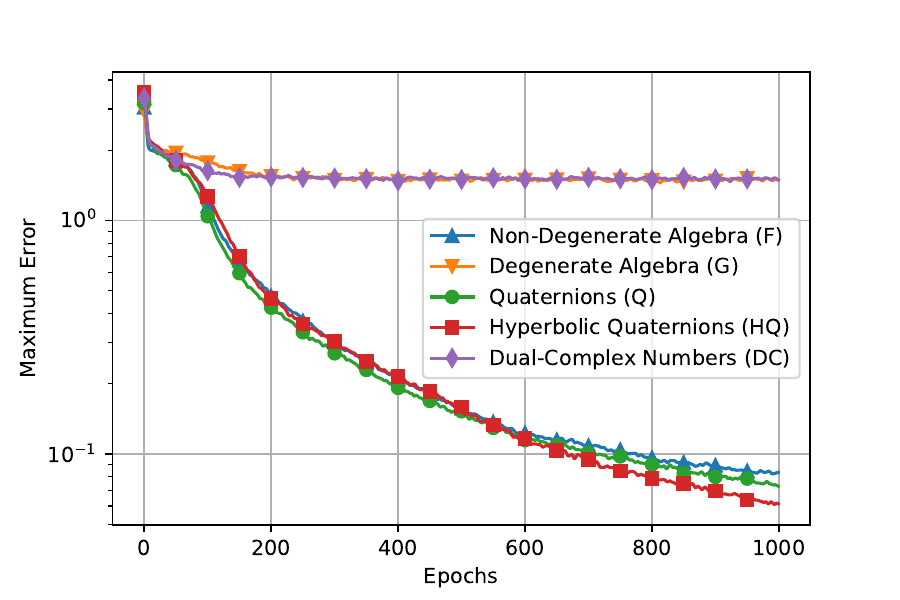}
    \caption{MSE by the number of epochs during the training phase of V-MLP networks with real-valued output weights defined on four-dimensional algebras.}
    \label{fig:Experiment-4D-R}
\end{figure}

Note from Figure \ref{fig:Experiment-4D-R} that the MSE consistently decreased for the V-MLP networks with real-valued output weights that were defined on non-degenerate algebras such as the algebra $\mathbb{F}$, the quaternions $\mathbb{Q}$, and the hyperbolic quaternions $\mathbb{HQ}$. {In contrast, the MSE produced by the networks defined on the degenerate algebras $\mathbb{G}$ and $\mathbb{DC}$ (dual-complex numbers) remained stagnant at $6.19\times 10^{-2}$ and $7.66 \times 10^{-2}$, respectively.}

\begin{figure}
    \centering
    \includegraphics[width=\columnwidth]{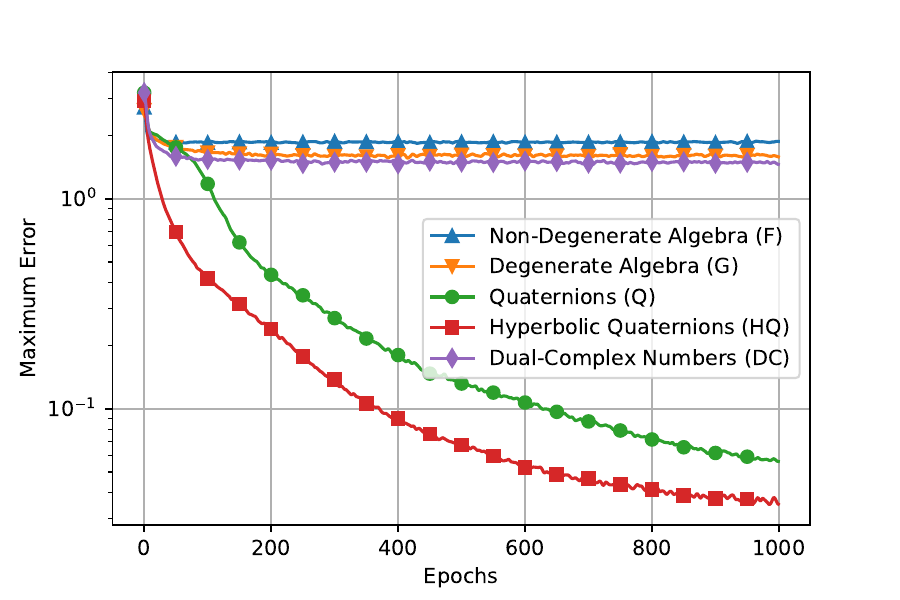}
    \caption{MSE by the number of epochs during the training phase of V-MLP networks with vector-valued output weights defined on four-dimensional algebras.}
    \label{fig:Experiment-4D-V}
\end{figure}

When vector-valued output weights were considered instead of real-valued ones, the MSE consistently decreased for the MLP networks defined on non-degenerate hypercomplex algebras such as $\mathbb{Q}$ and $\mathbb{HQ}$ (see Figure \ref{fig:Experiment-4D-V}). {For the degenerate algebras $\mathbb{G}$ and $\mathbb{DC}$, the MSE remained stagnant at $7.70\times10^{-2}$ and $7.60\times10^{-2}$, respectively. Moreover, notice that the MSE remained stagnant at $1.79\times10^{-1}$ for the V-MLP with vector-valued output weights defined on the non-degenerate algebra $\mathbb{F}$, which is not a hypercomplex algebra.}

In conclusion, the numerical examples provided in this subsection illustrate the following:
\begin{itemize}
    \item V-MLP networks with real-valued output weights defined on non-degenerate algebras exhibit approximation capability.
    \item We cannot ensure the approximation capability for a V-MLP with real-valued output weights defined on degenerate algebras. Indeed, the networks defined on the degenerate algebra $\mathbb{G}$ and dual-complex numbers $\mathbb{DC}$ failed to approximate the continuous vector-valued function $f_{\mathbb{V}}$ given by \eqref{eq:f-example-4D} on the compact set $K \subset \mathbb{V}$.
    \item Hypercomplex-valued MLP networks with hypercomplex-valued output weights also exhibit the approximation capability on non-degenerate algebras.
    \item V-MLP networks with vector-valued output weights may not approximate a vector-valued function if the algebra is not hypercomplex. In fact, the V-MLP networks defined on the non-degenerate algebra $\mathbb{F}$ failed to approximate the continuous function $f_{\mathbb{V}}:\mathbb{V}\to \mathbb{V}$ given by \eqref{eq:f-example-4D} on the compact set $K \subset \mathbb{V}$. 
\end{itemize}

\section{Concluding Remarks}\label{sec:concluding}

The universal approximation theorem states that a single hidden layer multi-layer perceptron (MLP) network can accurately approximate continuous functions with arbitrary precision. This important theoretical result was first proven for real-valued networks in the late 1980s and early 1990s by many researchers under different hypotheses on the activation function \cite{Cybenko1989ApproximationFunction,funahashi1989approximate,hornik1991approximation,leshno1993multilayer,Pinkus1999ApproximationNetworks}. In the subsequent years, the universal approximation theorem was also demonstrated for neural networks based on well-known hypercomplex algebras, such as complex \cite{arena1998neural}, quaternions \cite{ARENA1997335}, and Clifford algebras \cite{Buchholz2001}. However, each of these results was derived independently, leading to a lack of generality in the approximation capability of hypercomplex-valued networks. In this work, we investigate some of the existing universal approximation theorems and link the approximation capability of hypercomplex-valued networks using the broad class of vector-valued neural networks. 

Vector-valued neural networks are similar to real-valued models, but they utilize elements of an algebra $\mathbb{V}$ for weights, bias, inputs, and outputs \cite{valle2024understanding}. An algebra $\mathbb{V}$ is a finite-dimensional vector space with a bilinear operation called multiplication or product (see Definition \ref{def:algebra}). An algebra is considered hypercomplex if the multiplication has a two-sided identity (see Definition \ref{def:hypercomplex-algebra}). The broad class of V-nets includes complex, quaternion, tessarine, and Clifford-valued models \cite{aizenberg11book,arena1998neural,Buchholz2001,Buchholz2008OnPerceptrons,Carniello2021UniversalNetworks,hirose12} and many other hypercomplex-valued neural networks, for which their approximation capability was previously unknown \cite{Grassucci2022PHNNs:Convolutions,popa2016octonion,Wu2020DeepNetworks,Takahashi2021ComparisonControl,vieira2022general}. {Moreover, V-nets are equivalent to the neural networks introduced by Fan et al. using arbitrary bilinear forms \cite{Fan2020BackpropagationProducts}.}

Using the approximation capability of real-valued MLP networks, we showed that a V-MLP network with a split-activation function and real-valued output weights can approximate a continuous vector-valued function to any given precision on a compact, provided the algebra is non-degenerate. We also demonstrated the approximation capability of hypercomplex-valued MLP networks with hypercomplex output weights. These results are summarized in Theorem \ref{thm:tau_geral}, the primary contribution of this paper.

Concluding, the universal approximation theorem provided in this paper serves multiple purposes. Firstly, it consolidates the results regarding the universal approximation capability of V-MLP networks defined on many well-known algebras, eliminating the need to prove this property for each algebra individually. The class of non-degenerate hypercomplex algebras, including the complex and hyperbolic numbers, quaternions, tessarines, and Clifford algebras, are covered in this result. Furthermore, the universal approximation property is now recognized to hold for MLP networks on many other algebras including the Klein group and the octonions \cite{KOBAYASHI2020123,popa2016octonion}. Lastly, Theorem \ref{thm:tau_geral} should promote the use of vector- and hypercomplex-valued networks, which are known to perform well in problems involving multidimensional signals, such as images, video, and 3D movement \cite{parcollet2020survey,vieira2022general}. The universal approximation property strengthens these models' applications, making them better suited than real-valued models for a wider variety of applications.

\appendix

\section{Some Approximation Theorems from the Literature} \label{sec:UATS_diversos}


This appendix begins by reviewing the approximation capability of traditional (real-valued) multilayer perceptron networks. Subsequently, the approximation capability of some hyper-per-complex-valued neural networks is briefly discussed. {We would like to remark that all of these results can be derived from Theorem \ref{thm:tau_geral} proven in this paper.}


\subsection{The Approximation Capability of Traditional Neural Networks} \label{ssec:approximation-traditional-networks}

A traditional multilayer perceptron (MLP) is a feedforward artificial neural network architecture with neurons arranged in layers. Moreover, each neuron in a layer is connected to all neurons in the previous layer; hence, an MLP network is given by a sequence of fully connected or dense layers. The feedforward step through an MLP with a single hidden layer with $M$ neurons can be described by a finite linear combination of the hidden neuron outputs. Formally, the output of a traditional single hidden layer MLP network $\mathcal{N}_{\mathbb{R}}:\mathbb{R}^N \to \mathbb{R}$ is given by
\begin{equation}\label{eq:mlp_real}
\mathcal{N}_{\mathbb{R}}(\boldsymbol{x}) = \sum_{i=1}^{M} \alpha_{i} \psi_{\mathbb{R}}\left(\sum_{j=1}^N w_{ij}x_j  + b_{i} \right),
\end{equation}
for an input $\boldsymbol{x}=(x_1,\ldots,x_N) \in \mathbb{R}^N$. The parameters $w_{ij} \in \mathbb{R}$ and $\alpha_i \in \mathbb{R}$ are the weights between input and hidden layers, and hidden and output layers, respectively, for all $i=1,\ldots,M$ and $j=1,\ldots,N$. Moreover, $\psi_{\mathbb{R}}:\mathbb{R}\to \mathbb{R}$ is the activation function, and $b_{i} \in \mathbb{R}$ represents the bias term for the $i$th neuron in the hidden layer, for $i=1,\ldots,M$. 

\begin{remark} \label{rmk:MLP_linear}
Because $\mathcal{L}_i(\boldsymbol{x}) = \sum_{j=1}^N w_{ij}x_j$ defines a linear functional $\mathcal{L}_i:\mathbb{R}^N \to \mathbb{R}$, a single hidden layer MLP network can alternatively be written as
\begin{equation} \label{eq:R-MLP-ell}
\mathcal{N}_\mathbb{R}(\boldsymbol{x}) = \sum_{i=1}^{M} \alpha_{i} \psi_{\mathbb{R}}( \mathcal{L}_i(\boldsymbol{x}) + b_{i}), \quad \forall \boldsymbol{x} \in \mathbb{R}^n,
\end{equation}
where $\mathcal{L}_1,\mathcal{L}_2,\ldots,\mathcal{L}_M:\mathbb{R}^N \to \mathbb{R}$ are linear functionals.
\end{remark}

The class of all single hidden layer MLP networks with activation function $\psi_{\mathbb{R}}:\mathbb{R}\to \mathbb{R}$ is denoted by
\begin{equation}\label{eq:class_MLP_real}
    \mathcal{H}_{\mathbb{R}} = \left\{ \mathcal{N}_{\mathbb{R}}(\boldsymbol{x}) = \sum_{i=1}^{M} \alpha_{i} \psi_{\mathbb{R}}(\mathcal{L}_i(\boldsymbol{x}) + b_{i}): M \in \mathbb{N} \right\}.
\end{equation}
The universal approximation theorem provides conditions on the activation function $\psi_{\mathbb{R}}$ under which the class $\mathcal{H}_{\mathbb{R}}$ is dense in the set of all continuous functions defined on a compact set $K \subset \mathbb{R}^N$. Examples of activation function $\psi_{\mathbb{R}}:\mathbb{R} \to \mathbb{R}$ which ensures the approximation capability of single hidden layer MLP networks include sigmoid functions and the modern rectified linear units. The logistic function, defined by  
\begin{equation}
\label{eq:sigmoid}
    \sigma(x)=\frac{1}{1+e^{-x}}, \quad \forall x \in \mathbb{R},
\end{equation}
is an instance of a sigmoid activation function. The rectified linear unit activation function is defined as follows for all $x \in \mathbb{R}$:
\begin{equation} \label{eq:relu}
\mathtt{ReLU}(x)= \begin{cases}x, & x \geq 0, \\ 0, & \text{otherwise}.\end{cases} 
\end{equation}

As pointed out in the introduction, conditions on the activation function that yield the approximation capability of single hidden layer MLP networks with sufficient hidden neurons have been given by many prominent researchers in the late 1980s and early 1990s. Namely, Cybenko proved the approximation capability considering the broad class of so-called discriminatory continuous functions, which include sigmoid activation functions \cite{Cybenko1989ApproximationFunction}. Similarly, Funahashi proved the universal approximation theorem with nonconstant, bounded, and monotone continuous activation function \cite{funahashi1989approximate}. Independently, Hornik et al. derived the approximation capability of single hidden layer MLP networks analogous to Funahashi but without assuming the continuity of the activation function \cite{Hornik1989MultilayerApproximators}. {Also, Chen et al. provided approximation results considering bounded sigmoid activation functions, not necessarily continuous \cite{chen1995universal,chen1990constructive}.} The reader interested in a historical account of the main results on the approximation capability of MLP networks is invited to consult  \cite{Ismailov2021RidgeNetworks,Pinkus1999ApproximationNetworks}. Accordingly, based on Leshno et al. \cite{leshno1993multilayer}, the universal approximation theorem for a single hidden layer neural network with continuous activation functions can be formulated as follows:

\begin{theorem}[Universal Approximation Theorem \cite{Pinkus1999ApproximationNetworks}] \label{thm:Real-TAU}
Let $\psi_{\mathbb{R}}:\mathbb{R} \to \mathbb{R}$ be a continuous function.
The class of all real-valued neural networks $\mathcal{H}_{\mathbb{R}}$ defined by \eqref{eq:class_MLP_real} is dense in the class $\mathcal{C}(K)$ of all real-valued continuous functions on a compact $K \subseteq \mathbb{R}^N$ if and only if $\psi_{\mathbb{R}}$ is not a polynomial.
\end{theorem}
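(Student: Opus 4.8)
The plan is to dispatch the two implications separately, with the ``only if'' direction being essentially immediate and the ``if'' direction carrying all the weight. For necessity, suppose $\psi_{\mathbb{R}}$ is a polynomial of degree $d$. Then for any linear functional $\mathcal{L}_i:\mathbb{R}^N \to \mathbb{R}$ and bias $b_i$, the composite $\psi_{\mathbb{R}}(\mathcal{L}_i(\boldsymbol{x})+b_i)$ is a polynomial in $\boldsymbol{x}$ of total degree at most $d$, and finite linear combinations preserve this bound. Hence every element of $\mathcal{H}_{\mathbb{R}}$ defined by \eqref{eq:class_MLP_real} lies in the space $P_d$ of polynomials of degree at most $d$. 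Since $P_d$ is finite-dimensional, its image in $\mathcal{C}(K)$ is a closed proper subspace (as $\mathcal{C}(K)$ is infinite-dimensional for any infinite $K$), so density fails. This observation also fixes the target for the hard direction: I must show that non-polynomiality forces every monomial $t^k$, and hence every polynomial, into the closure of the relevant span.

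For sufficiency, the first move is to reduce to one input dimension. The tool is the classical density of ridge functions: linear combinations $\sum_i g_i(\boldsymbol{w}_i \cdot \boldsymbol{x})$, ranging over all directions $\boldsymbol{w}_i \in \mathbb{R}^N$ and all $g_i \in \mathcal{C}(\mathbb{R})$, are dense in $\mathcal{C}(K)$ (provable via exponential or cosine ridge functions together with Stone--Weierstrass). Granting this, it suffices to approximate each univariate $g_i$ uniformly on the compact interval $\{\boldsymbol{w}_i\cdot\boldsymbol{x}:\boldsymbol{x}\in K\}$ by expressions $\sum_\ell \alpha_\ell \psi_{\mathbb{R}}(\lambda_\ell t + \beta_\ell)$, since substituting $t = \boldsymbol{w}_i\cdot\boldsymbol{x}$ then yields a bona fide element of $\mathcal{H}_{\mathbb{R}}$. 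Thus the whole problem collapses to the one-variable claim: if $\psi_{\mathbb{R}} \in \mathcal{C}(\mathbb{R})$ is not a polynomial, then $\operatorname{span}\{\psi_{\mathbb{R}}(\lambda t + \beta) : \lambda,\beta \in \mathbb{R}\}$ is dense in $\mathcal{C}[a,b]$ for every compact interval $[a,b]$.

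For the univariate claim I would first treat the smooth case $\psi_{\mathbb{R}} \in C^\infty(\mathbb{R})$ by a differentiation trick. Forming the difference quotient in the dilation parameter, $h^{-1}\bigl[\psi_{\mathbb{R}}((\lambda+h)t+\beta) - \psi_{\mathbb{R}}(\lambda t + \beta)\bigr]$, and letting $h \to 0$, the limit $t\,\psi_{\mathbb{R}}'(\lambda t + \beta)$ lies in the uniform closure of the span, each quotient being a genuine admissible linear combination. Iterating $k$ times places $t^k \psi_{\mathbb{R}}^{(k)}(\lambda t + \beta)$ in the closure, and setting $\lambda = 0$ gives $\psi_{\mathbb{R}}^{(k)}(\beta)\,t^k$. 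Because $\psi_{\mathbb{R}}$ is not a polynomial, for each $k$ some $\beta$ makes $\psi_{\mathbb{R}}^{(k)}(\beta)\neq 0$, so every monomial $t^k$ lies in the closure; by the Weierstrass theorem the span is then dense in $\mathcal{C}[a,b]$.

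The main obstacle is removing the smoothness hypothesis, which I would handle by mollification. For $\phi \in C_c^\infty(\mathbb{R})$ the convolution $\psi_{\mathbb{R}} * \phi$ is smooth and, being a uniform limit of Riemann sums $\sum_\ell \phi(s_\ell)\,\psi_{\mathbb{R}}(t - s_\ell)\,\Delta s$, lies in the closure of the span of translates (hence of dilates-and-translates) of $\psi_{\mathbb{R}}$. The delicate step is producing a single $\phi$ for which $\sigma := \psi_{\mathbb{R}}*\phi$ is still not a polynomial: I would argue by contradiction that if $\psi_{\mathbb{R}}*\phi$ were a polynomial for every $\phi \in C_c^\infty$, then applying this to an approximate identity $\phi_\varepsilon$ and controlling the degree would force $\psi_{\mathbb{R}}$ itself to be a polynomial, against hypothesis. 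Fixing such a $\phi$, I apply the smooth case to $\sigma$ to get density of the span of its dilates and translates, and transfer this back to $\psi_{\mathbb{R}}$ because $\sigma$ already sits in the closure of the span generated by $\psi_{\mathbb{R}}$. Combining with the ridge-function reduction then completes the sufficiency direction, and hence Theorem \ref{thm:Real-TAU}. I expect the degree-control argument in the mollification step --- ruling out that \emph{every} mollification is polynomial without an a priori bound on the degree --- to be the most technical part of the proof.
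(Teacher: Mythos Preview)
The paper does not prove Theorem~\ref{thm:Real-TAU} at all: it is stated in the appendix as a known result, attributed to Leshno et al.\ and Pinkus, and is simply invoked (via the Tauber--Wiener formulation) as an ingredient in the proof of the paper's main Theorem~\ref{thm:tau_geral}. So there is no ``paper's own proof'' to compare against.

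That said, your sketch is essentially the standard Leshno--Pinkus argument and is correct in outline. The necessity direction is fine. For sufficiency, the ridge-function reduction to one variable, the differentiation-in-the-dilation-parameter trick to extract monomials in the $C^\infty$ case, and the mollification bridge from continuous to smooth are exactly the steps in Pinkus's survey. Your identification of the delicate point is accurate: the claim ``if $\psi_{\mathbb{R}}*\phi$ is a polynomial for every $\phi\in C_c^\infty$, then $\psi_{\mathbb{R}}$ is a polynomial'' requires a uniform degree bound, which in the original proofs is obtained either by a Baire-category argument on the sets $\{\phi:\deg(\psi_{\mathbb{R}}*\phi)\le d\}$ or by a finite-dimensionality argument along an approximate identity. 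You should make that step explicit rather than leave it as an expectation, since without the degree bound the limit of $\psi_{\mathbb{R}}*\phi_\varepsilon$ need not be a polynomial.
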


\begin{remark}
Note that Theorem \ref{thm:Real-TAU} implies the following: Given $f_\mathbb{R}:K \to \mathbb{R}$ and $\varepsilon>0$, there exist a single hidden layer MLP network given by \eqref{eq:mlp_real} such that
\begin{equation}
    | f_\mathbb{R}(\boldsymbol{x})-\mathcal{N}_\mathbb{R}(\boldsymbol{x})|< \varepsilon, \quad \forall \boldsymbol{x} \in K,
\end{equation}
if $\psi_{\mathbb{R}}:\mathbb{R} \to \mathbb{R}$ is a continuous non-polynomial activation function. In particular, the logistic and the $\mathtt{ReLU}$ functions, defined respectively by \eqref{eq:sigmoid} and \eqref{eq:relu}, yield the universal approximation capability for single hidden layer MLP networks. 
\end{remark}

{
We would like to remark that although Theorem \ref{thm:Real-TAU} dates to the 1990s, the approximation capability of neural networks is yet a field of active research. Some recent works address, for example, error bounds for neural networks \cite{Almira2021NegativeNetworks,guliyev2018approximation, Ismailov2024ApproximationWeights} and the approximation capability of networks besides the MLP architecture \cite{Ismayilova2024OnNetworks,NEURIPS2018_03bfc1d4}.
}

\subsection{Complex-valued MLP Networks}\label{subsec:complex}

The structure of a complex-valued MLP ($\mathbb{C}$MLP) is equivalent to that of a real-valued MLP, except that input and output signals, weights, and bias are complex numbers instead of real values. Additionally, the activation functions are complex-valued functions \cite{arena1998neural}. 

Note that the logistic function given by \eqref{eq:sigmoid} can be generalized to complex parameters using Euler's formula  $ e^{t \ii} = \cos (t) + \ii \sin(t)$ as follows for all $x \in \mathbb{C}$:
\begin{eqnarray} \label{eq:sigmoid_C}
\sigma_{\mathbb{C}}(x)=\frac{1}{1+e^{-x}}.
\end{eqnarray}
However, Arena et al. (1998) noted that the universal approximation property is generally not applicable when using single hidden layer $\mathbb{C}$MLP networks with the activation function given by \eqref{eq:sigmoid_C} \cite{arena1998neural}. Nonetheless, based on the works of Cybenko, they proved the universal approximation theorem for $\mathbb{C}$MLP networks with split continuous discriminatory activation functions \cite{arena1998neural}. A complex-valued split activation function $\psi_{\mathbb{C}}:\mathbb{C} \to \mathbb{C}$ is defined as follows using a real-valued function $\psi_{\mathbb{R}}:\mathbb{R} \to \mathbb{R}$ 
\begin{eqnarray}\label{eq:split_complex}
\psi_{\mathbb{C}}(x) = \psi_{\mathbb{R}}(\xi_0)+ \ii \psi_{\mathbb{R}}(\xi_1), \quad \forall x=\xi_{0}+ \ii \xi_{1} \in \mathbb{C}.
\end{eqnarray} 
We would like to point out that the universal approximation capability of complex-valued neural networks has been recently further investigated by Voigtlaender, where the author characterizes the requirements on the activation function \cite{Voigtlaender2023TheNetworks}.

\subsection{Quaternion-valued MLP Networks}\label{subsec:quat}

In the same vein, Arena et al. also defined quaternion-valued MLP ($\mathbb{Q}$MLP) by replacing the real input and output, weights and biases, with quaternion numbers. Using the result of Cybenko, they proceeded to prove that single hidden layer $\mathbb{Q}$MLPs with real output weights and split continuous discriminatory activation functions are universal approximators in the set of continuous quaternion-valued functions \cite{ARENA1997335}. A split quaternion function $\psi_{\mathbb{Q}}:\mathbb{Q} \to \mathbb{Q}$ is defined as follows using a real-valued function $\psi_{\mathbb{R}}:\mathbb{R}\to \mathbb{R}$:
\begin{equation}
    \psi_{\mathbb{Q}}(x) =  \psi_{\mathbb{R}}(\xi_0)+ \psi_{\mathbb{R}}(\xi_1) \ii_1 +   \psi_{\mathbb{R}}{(\xi_2)} \ii_2+ \psi_{\mathbb{R}}{(\xi_3)}\ii_3,
\end{equation}
for $x = \xi_0 + \xi_1\ii_1 + \xi_2\ii_2 + \xi_3\ii_3 \in \mathbb{Q}$.

\subsection{Hyperbolic-valued MLP Networks}\label{subsec:hyperbolic}

The hyperbolic numbers, denoted by $\mathbb{U}$, constitute a two-dimensional hypercomplex algebra similar to complex numbers with an imaginary unit $\ii$ such that $\ii^2=1$ \cite{buchholz2000hyperbolic,nitta08,nitta18}. Therefore, the multiplication of two hyperbolic numbers $x=\xi_0+\xi_1\ii$ and $y=\eta_0+\eta_1\ii$ yields $xy = \mathcal{B}_0(x,y) + \mathcal{B}_1(x,y)\ii$, where $\mathcal{B}_0(x,y) = \xi_0\eta_0 + \xi_1\eta_1$ and $\mathcal{B}_1(x,y) = \xi_0\eta_1 + \xi_1 \eta_0$.

In the year 2000, Buchholz and Sommer introduced an MLP based on hyperbolic numbers, the aptly named hyperbolic multilayer perceptron ($\mathbb{U}$MLP). This network with a split logistic activation function defined analogously to \eqref{eq:split_complex}, is also a universal approximator \cite{buchholz2000hyperbolic}. Buchholz and Sommer provided experiments highlighting that the $\mathbb{U}$MLP can learn tasks with underlying hyperbolic properties much more accurately and efficiently than $\mathbb{C}$MLP and real-valued MLP networks.

\subsection{Tessarine-valued MLP Networks}\label{subsec:tessarine}

Tessarines, usually denoted by $\mathbb{T}$, are a four-dimensional commutative hypercomplex algebra similar to quaternions \cite{CERRONI2017232}. Like the quaternions, tessarines have been used for digital signal processing
\cite{Ortolani2017OnProcessing,Navarro-Moreno2020TessarineCondition}.
Recently, Carniello et al. experimented with networks with inputs, outputs, and parameters in the tessarine algebra \cite{Carniello2021UniversalNetworks}.
The researchers proposed the $\mathbb{T}$MLP, a multilayer perceptron architecture similar to the complex, quaternion, and hyperbolic networks mentioned above but based on tessarines. The authors then proceeded to show that the proposed $\mathbb{T}$MLP is a universal approximator for continuous functions defined on a compact set. Experiments show that the tessarine-valued network is a powerful approximator, presenting superior performance when compared to the real-valued MLP in a task of approximating tessarine-valued functions \cite{Carniello2021UniversalNetworks,Senna2021TessarineClassification}.

\subsection{Clifford-valued MLP Networks}\label{subsec:clifford}

Buchholz and Sommer investigated in 2001 neural networks that use Clifford algebras \cite{Buchholz2001}. They proved that the universal approximation property applies to multilayer perceptrons (MLPs) networks that are based on non-degenerate Clifford algebras. They also pointed out that degenerate Clifford algebras may result in models that lack universal approximation capabilities.

It is worth noting that Buchholz and Sommer considered sigmoid activation functions. However, it is possible to show that Clifford MLPs are universal approximators with the split $\mathtt{ReLU}$ activation function as well.

\subsection{Octonion-valued MLP Networks}

Octonions, developed independently by Graves and Cayley, constitute an eight-dimensional hypercomplex algebra that extends the complex numbers and quaternion \cite{baez02}. Octonions have important applications in fields like string theory, special relativity, and quantum logic, although they may not be as well-known as quaternions and complex numbers \cite{baez02}.

An octonion-valued neural network is a type of machine learning model used to process octonion-valued inputs and outputs, and whose trainable parameters are also octonions \cite{kuroe16,popa2016octonion,castro17cnmac,Saoud2020MetacognitiveAnalysis,Wu2020DeepNetworks}. In particular, Popa introduced a class of fully connected neural networks based on octonion algebra \cite{popa2016octonion}. This class can be viewed as a generalization of complex and quaternion-valued neural networks but does not fall into the Clifford-valued neural networks category due to the octonion product's non-associativity.  The approximation capability for this model has not yet been proven in the literature, but it is covered by Theorem \ref{thm:tau_geral}.

\section*{Funding information}

M.E. Valle is supported by the National Council for Scientific and Technological Development (CNPq) under grant no. 315820/2021-7 and the São Paulo Research Foundation (FAPESP) under grant no. 2022/01831-2. Wington is supported by the Instituto de Pesquisa Eldorado, Campinas--Brazil, and the Coordenação de Aperfeiçoamento de Pessoal de Nível Superior (CAPES) -- Finance Code 001.

\section*{Declaration of Competing Interest}

The authors declare they have no conflicts of interest.

\section*{Credit authorship contribution statement}

\textbf{Conceptualization}: M.E. Valle and W.L. Vital; 
\textbf{Formal analysis}: M.E. Valle and W.L. Vital;
\textbf{Funding acquisition}: M.E. Valle and W.L. Vital;
\textbf{Investigation}: M.E. Valle and W.L. Vital;
\textbf{Methodology}: G. Vieira, M.E. Valle, and W.L. Vital;
\textbf{Software}: M.E. Valle and W.L. Vital;
\textbf{Supervision}: M.E. Valle;
\textbf{Validation}: G. Vieira
\textbf{Visualization}: M.E. Valle and W.L. Vital;
\textbf{Writing}: G. Vieira, M.E. Valle and W.L. Vital;

\section*{Declaration of generative AI and AI-assisted technologies in the writing process}

During the preparation of this work, the authors used Grammarly solely to improve language and readability. After using this tool, the authors reviewed and edited the content as needed and take full responsibility for the content of the publication.


\end{document}